\documentclass[11pt]{article}
\usepackage{natbib}
\usepackage[utf8]{inputenc}
\usepackage[T1]{fontenc}
\usepackage{mathptmx} % Times New Roman font
\usepackage[margin=1in]{geometry}
\usepackage{amsmath}
\usepackage{amssymb}
\usepackage{mathtools}
\usepackage{amsthm}
\usepackage{amsfonts}
\usepackage{bm}
\usepackage{graphicx}
\usepackage{url}
\usepackage{hyperref}
\usepackage{cleveref}
\usepackage{xurl}
\usepackage{latexsym}
\usepackage{subcaption}
\usepackage{enumerate}
\usepackage{booktabs}
\usepackage{multirow}
\usepackage{multicol}
\usepackage{array}
\usepackage{caption}
\usepackage{rotating}
\usepackage{xcolor}
\usepackage{algorithm}
\usepackage{algorithmic}
\usepackage{setspace}
\usepackage{tcolorbox}
\usepackage{subcaption}
\usepackage{tikz}
\usepackage{adjustbox}
\usetikzlibrary{positioning,arrows.meta,shadows.blur}

\theoremstyle{plain}
\newtheorem{proposition}{Proposition}[section]

\newcommand{\Require}{\STATE \textbf{Input:} }
\newcommand{\Ensure}{\STATE \textbf{Output:} }

\graphicspath{{figs/}}

\crefname{figure}{Figure}{Figures}

\title{Statistical Early Stopping for Reasoning Models}
\author{%
  Yangxinyu Xie$^{1}$, Tao Wang$^{1}$, Soham Mallick$^{1}$, Yan Sun$^{2}$, Georgy Noarov$^{1}$, Mengxin Yu$^{3}$\\
  Tanwi Mallick$^{4}$, Edgar Dobriban$^{1}$\\
  \small $^{1}$University of Pennsylvania, \,
  $^{2}$ New Jersey Institute of Technology, \,
    $^{3}$ Washington University in St. Louis, \\
  \small $^{4}$Argonne National Laboratory
}
\date{\today} % Empty date field

\begin{document}

\maketitle
\begin{abstract}
While LLMs have seen substantial improvement in reasoning capabilities, they also sometimes overthink, generating unnecessary reasoning steps, particularly under uncertainty, given ill-posed or ambiguous queries. We introduce statistically principled early stopping methods that monitor uncertainty signals during generation to mitigate this issue.  Our first approach is parametric: it models inter-arrival times of uncertainty keywords as a renewal process and applies sequential testing for stopping. Our second approach is nonparametric and provides finite-sample guarantees on the probability of halting too early on well-posed queries. We conduct empirical evaluations on reasoning tasks across several domains and models. Our results indicate that uncertainty-aware early stopping can improve both efficiency and reliability in LLM reasoning, and we observe especially significant gains for math reasoning. The source code to reproduce our experiments is available at \url{https://github.com/Xieyangxinyu/reasoning_uncertainty}
\end{abstract}

\vspace{0.2in}
%\noindent\textbf{Keywords:} 

\tableofcontents
\medskip

\section{Introduction}

Large language models (LLMs) have made remarkable progress in multi-step reasoning, yet they sometimes still struggle when faced with ill-posed or ambiguous queries, see e.g., \citet{kirichenko2025abstentionbench, ma2024large}, etc. Instead of abstaining or clarifying, models often attempt to provide definitive answers \citep{kirichenko2025abstentionbench, ma2024large}. This tendency can undermine reliability and waste computation on answers that should not have been generated.
A related failure mode is \emph{overthinking}: producing unnecessarily long reasoning traces that do not improve accuracy. Though reasoning can improve performance, empirical evidence suggests that verbose reasoning sometimes correlates with incorrect or uncertain predictions \citep{su2025between, fan2025missing}, and that reasoning models may verbalize uncertainty without abstaining \citep{mei2025reasoning, fan2025missing, ma2024large}. Together, these findings point to a central challenge: reasoning models lack principled mechanisms to regulate reasoning dynamically in response to uncertainty.

Formally, consider a practitioner who deploys a reasoning model to interface with users performing a specific class of tasks, for instance, solving mathematical problems or answering scientific questions. In deployment, users may submit both well-posed queries (answerable through structured reasoning) and ill-posed or ambiguous ones that lack sufficient information to be answerable. The practitioner's objective is to construct a stopping rule that avoids wasteful overthinking on ill-posed queries while satisfying two key desiderata: 
\begin{enumerate}[i.]
    \item the rule should rarely halt correct reasoning on well-posed, answerable questions, formally, maintaining a false positive rate of at most $\alpha \in (0,1)$;
    \item the rule should halt reasoning as often as possible on ill-posed or ambiguous queries, i.e., it should have high true positive rate, or power. 
\end{enumerate}
Once halted, the practitioner can use alternative strategies such as summarizing the reasoning trace to indicate uncertainty, requesting clarification from the user or abstaining from answering.

Prior work has proposed several strategies to address this challenge, falling into two broad categories: unsupervised and supervised approaches. Unsupervised methods rely on designing prompts to encourage models to express uncertainty or abstain when appropriate. For instance, \citet{ma2024large, huang2025confqa, peng2025revisiting} develop prompts that explicitly instruct models to think critically or answer only when confident. While these methods demonstrate moderate empirical improvements and require no training data, they lack formal guarantees on false positive rates or stopping behavior.

Supervised methods, by contrast, attempt to learn when to stop from labeled data. One notable supervised approach \citep{zhang2025reasoning, liu2025answering, wu2025thought} uses probing techniques that monitor the model's hidden activations to detect uncertainty signals during inference. Similarly, these methods require training data containing responses to both well-posed and ill-posed queries. These supervised approaches face several practical limitations. First, curating representative negative examples poses a significant challenge, as practitioners cannot easily anticipate the diverse ways users might phrase ill-formed or ambiguous questions. In contrast, it is straightforward to curate a collection of well-posed problems on which the model performs reliably. Secondly, and more importantly, the hidden states of a reasoning model may encode a superposition of multiple signals, making it difficult to interpret why a particular stop was triggered or to control false positive rates in an intuitive way. As a result, these methods may struggle to generalize when test-time queries differ from those seen during training, especially under distribution shifts. A separate line of work proposes early-stopping rules based on model uncertainty estimated from entropy or related logit-derived measures \citep{yang2025dynamic, yong2025think}. We will show empirically, however, such thresholds are also brittle under distribution shift. Both probing-based and entropy-based methods also require access to internal activations or logits, making them incompatible with proprietary LLMs exposed only through black-box APIs.

Alternatively, one may fine-tune models on reasoning trajectories from both answerable and unanswerable queries \citep{wang2025beyond}, though this approach demands substantial computational resources that may be inaccessible to many practitioners and we do not explore it further in this work.

Taken together, these limitations point to a broader and more fundamental challenge: First, these approaches provide little interpretability: while the signals extracted from hidden states or logits may correlate with uncertainty, they are difficult to understand or attribute, and practitioners cannot easily determine what the methods are reacting to when halting. Second, real-world deployments rarely satisfy the exchangeability assumptions implicit in many supervised or uncertainty-based methods, as test-time queries submitted by users need not resemble those seen during training or calibration. We also note that existing work rarely examines generalization beyond mathematical reasoning problems. Lastly, for practitioners interested in using proprietary LLMs, methods that require fine tuning, or access to internal activations or logits are simply not applicable.

Motivated by these observations, we propose a semi-supervised approach that combines interpretability with statistical rigor while testing generalizability across domains. As qualitative analyses from prior work suggest that models often verbalize uncertainty in their reasoning chains \cite{kirichenko2025abstentionbench, fan2025missing, ma2024large}, our method proceeds in two stages. First, we use a limited seed set to identify uncertainty keywords that appear in reasoning traces. We extract these keywords by comparing reasoning traces generated in response to a set of well-posed versus ill-posed queries. This set consists of 400 pairs of well-posed and ill-posed math problems from GSM8K \citep{kirichenko2025abstentionbench}, where ill-posed variants are generated via a very simple removal of information, as proposed by \citet{kirichenko2025abstentionbench}. These keywords, such as "impossible [to] determine" and "without [the] details," carry intuitive semantic meaning, making the stopping mechanism interpretable by design. Crucially, because our approach relies solely on the observable reasoning trace rather than internal activations or logits, it remains fully compatible with proprietary LLMs accessed only through black-box APIs. Second, we develop two statistically principled stopping rules that monitor the frequencies of these keywords during token generation, using only a calibration set of reasoning traces on well-posed problems from new, independent task domains. Critically, this calibration phase requires no knowledge of how questions might be ill-posed in the target domain and requires no negative examples.
% \ed{make it explicit what model returns after stopping}
When a stopping rule is triggered,  practitioners can define the model’s behavior when the generation is halted, e.g., the model can return an explicit abstention response such as "I don't know."

\begin{figure}
    \centering
    \includegraphics[width=0.5\textwidth]{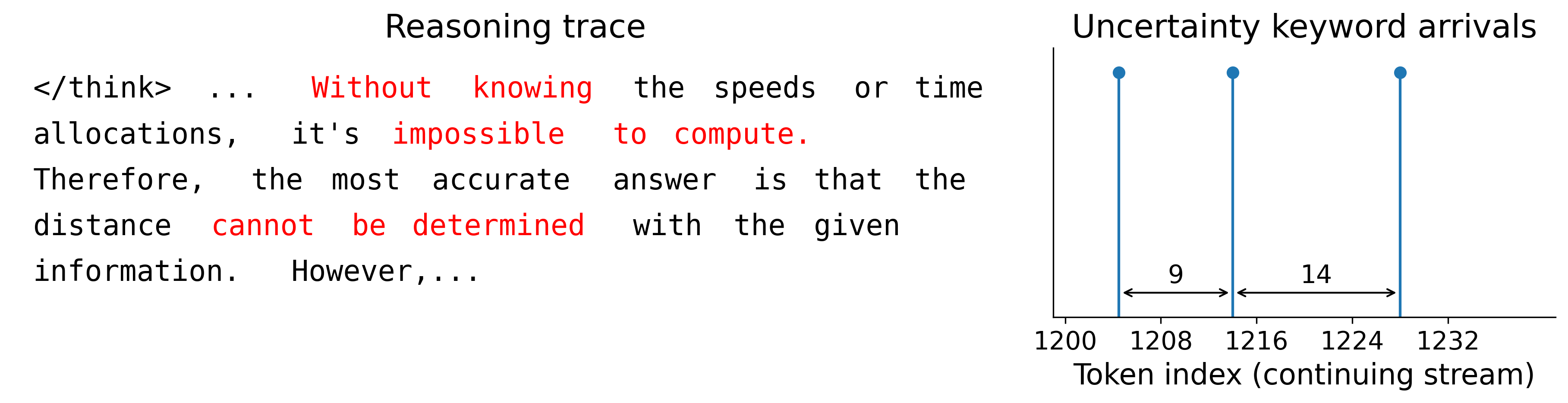}
    \caption{An illustration of uncertainty keyword arrival times, where inter-arrival gaps (e.g., 9 and 14 tokens) are highlighted to motivate our renewal-process–based stopping rule.}
    \label{fig:illustration}
\end{figure}

Our methods view the occurrence of uncertainty keywords as a point process along the `time series' of generated tokens: each time the model generates a token, we check whether it contains any uncertainty keywords. If so, we record an `arrival' of uncertainty at that time step; see Figure~\ref{fig:illustration} for an illustration. We then develop two stopping rules based on this point process.

Our first rule is inspired by renewal process theory: it models the inter-arrival times of uncertainty keywords and leverages asymptotic properties of renewal processes to construct sequential tests for stopping. Next, we introduce a nonparametric alternative based on conformal prediction that provides finite-sample guarantees, ensuring that the probability of halting too early on well-posed queries is controlled at a pre-selected false positive rate.

Our evaluation framework focuses on the generalizability of these stopping rules: by design, we introduce distribution shifts between the calibration set (used to set thresholds) and the test set (used to evaluate performance). This allows us to interrogate the robustness of our methods when test-time queries differ from those seen during calibration. Through systematic experiments across mathematical and scientific reasoning tasks, we show that our approaches improve efficiency (cutting unnecessary tokens on ill-posed queries),
while avoiding premature halts on well-posed queries and maintaining accuracy. Overall, we demonstrate that our semi-supervised, interpretable stopping rules offer a promising direction for enhancing the reliability of reasoning models in real-world deployments.
\section{Methods}
\label{sec:methods}

Qualitative analyses from
\citet{kirichenko2025abstentionbench, fan2025missing, ma2024large} 
suggest that models often verbalize uncertainty in their reasoning traces, albeit without abstaining from answering. 
Inspired by these findings, we propose algorithms that leverage such uncertainty signals within the reasoning traces for early stopping. In the following, we describe our proposed framework, which consists of three stages:
\begin{enumerate}
    \item Uncertainty Keyword Set Construction: We construct a set of uncertainty keywords using a semi-supervised procedure applied once on a small set of models and reasoning traces. This step is fully independent of model-specific calibration or test-time data used in the next two stages.
    \item Stopping Rule Calibration for a Given Model: For the practitioner's target model, we use its reasoning traces responding to a set of well-posed problems to estimate uncertainty statistics and construct a model-specific stopping rule. We refer to the reasoning traces used in this step as the \emph{calibration traces}.
    \item Test-Time Application: For new user queries, whether well-posed or ill-posed, we apply the calibrated stopping rule during decoding to determine, in real time, whether to continue or halt reasoning. We refer to the reasoning traces generated at test time as \emph{test traces}. 
\end{enumerate}

\subsection{Uncertainty Keyword Set Construction}
\label{sec:keyword_set_construction} 
We construct an uncertainty keyword lexicon using a semi-supervised, interpretable feature selection pipeline applied to reasoning traces. Starting from 800 paired traces of well-posed and unanswerable GSM8K problems generated by four reasoning models, we train random-forest classifiers on $k$-gram bag-of-words features to discriminate between the two trace types. Uncertainty-related features are identified by intersecting high-importance $k$-grams across cross-validation folds. We then retain only interpretable uncertainty expressions and categorize them into three semantically meaningful groups-Impossibility, Speculation, and Insufficiency-via rule-based lexical filtering, with Insufficiency capturing missing-information cues most indicative of ill-posedness. The resulting uncertainty set $\mathcal{K}$ contains 102 keywords and is used by our stopping rules, while auxiliary categories capturing generic epistemic or transitional language are reserved for ablations. This procedure yields a compact, interpretable lexicon that generalizes across models and domains while requiring only a small, easily constructed seed dataset. More details of the implementation are provided in \Cref{app: uncertainty_keyword_set_construction}.

\subsection{Uncertainty Keywords as a Renewal Process}

In this paper, we view the occurrences of uncertainty keywords in reasoning traces as arrivals in a renewal process. Let $T = (t_1, t_2, \ldots, t_L)$
denote a tokenized reasoning trace of length $L$ tokens. We define the arrival times $\{X_1, X_2, \ldots, X_k\}$
as the token positions where uncertainty keywords in $\mathcal{K}$ are detected, obtained via the greedy longest-match algorithm described in Appendix~\ref{app: keyword_detection_algorithm}. The algorithm identifies an arrival by matching the longest possible keyword phrase in $\mathcal{K}$ starting from each token position in $T$.\footnote{The traces are preprocessed in the same way as in Section~\ref{sec:keyword_set_construction} before matching; we also mitigate over-counting highly correlated keywords by skipping the next 5 grams after each detected arrival.} This view enables us to propose two early stopping rules based on the statistical properties of the frequency and timing of these uncertainty signals. Throughout our discussion, we assume the practitioner would like to control the false positive rate of early stopping on well-posed queries at a user-specified level $\alpha \in (0,1)$, e.g. $\alpha = 5\%$.

In this stage, we collect a set of independent and identically distributed calibration traces $\{T^{(i)}\}_{i=1}^n$ from the practitioner's target model, where each trace $T^{(i)}$ corresponds to the model's reasoning on a well-posed query. We then use these traces to calibrate two stopping rules, described next. Notice that these calibration traces do not require any knowledge of how questions might be ill-posed in the target domain, as such enumerating ill-posed queries is a formidable challenge in practice. Later in Section~\ref{sec:experiments}, we stick to this setup and evaluate the generalizability of our methods under distribution shifts between calibration and test sets.

\paragraph{Renewal Process Stopping}  Our first approach is a parametric rule based on renewal process theory \citep{grimmett2001probability}. This rule stops decoding when the observed arrival rate of uncertainty phrases is significantly higher than expected. From the calibration trace $T^{(i)}$, we extract inter-arrival times $A_j^{(i)} = X_{j+1}^{(i)} - X_{j}^{(i)}, \quad j=1, \ldots$. Define the pooled set $\mathcal{A}=\{A_j^{(i)}: i=1,\dots,n,\ j=1,\dots,k_i-1\}$ where $k_i$ is the number of arrivals in trace $i$, let $M_{\text{pool}}=|\mathcal{A}|$, then
$\hat{\mu}=(1/M_{\text{pool}})\sum_{(i,j)} A_j^{(i)}$ and $\hat{\sigma}^2=(1/(M_{\text{pool}}-1))\sum_{(i,j)}(A_j^{(i)}-\hat{\mu})^2$.\footnote{If there are less than 2 arrivals in a calibration trace, we use the length of the trace as a single inter-arrival time, although this is an underestimation. We make this choice to avoid discarding traces with few arrivals, which could also bias the estimates downwards and increase variance.}
% We estimate the mean and variance of their distribution as
% $$
% \hat{\mu} = \sum_{j=1}^M A_j^{(i)}/{M} \quad \text{and} \quad \hat{\sigma}^2 = \sum_{j=1}^M (A_j^{(i)} - \hat{\mu})^2/({M-1}),
% $$
% where $M$ is the total number of inter-arrival times collected across all calibration traces. 

For a test trace, let $N_t$ be the number of arrivals up to position $t$. Renewal theory \citep[see][Sec.~10.2, p.~417]{grimmett2001probability} implies that the normalized statistic
\(
Z_t = \left(N_t - t/\hat{\mu}\right)\Big/\ \sqrt{t \hat{\sigma}^2 / \hat{\mu}^3}\)
converges in distribution to a standard normal distribution as $t \to \infty$, if the reasoning trace corresponds to a well-posed query.\footnote{While the result applies for renewal processes with i.i.d. interarrival times, we use it as a working approximation for LM-generated traces.} 
If $Z_t$ exceeds $z_{1-\alpha'}$, the $(1-\alpha')$ standard normal quantile, we halt generation at step $t$. To account for repeated testing, we adjust the significance level using the Sidák correction \citep{sidak1967rectangular}:  
\(
\alpha' = 1 - (1 - \alpha)^{1/\chi},
\)
where $\chi = \left\lfloor \frac{L_{\max}}{B} \right\rfloor$ is the maximum number of tests performed for this trace. Here, $L_{\max}$ is the maximum allowed length of the reasoning trace, and $B$ is a user-specified bin size that determines how frequently we check for stopping. Specifically, we check for stopping at token positions $t = B, 2B, 3B, \ldots$.\footnote{Technically, this correction requires independence, which does not hold in our case. Moreover, the Sidák correction is known to be conservative when tests are correlated \citep{holland1987improved}. Another practical challenge is that we cannot know $L_{\max}$ in advance, at least not precisely or making strong assumptions. We view the renewal theory as a practical approximation rather than an exact guarantee. In our experiments, we set $L_{\max}$ to be the median length of calibration traces, and we find that this method performs reasonably well in our experiments (see Section~\ref{sec:experiments}).}

\paragraph{Maxwise Conformal Stopping}

Our next approach leverages an uncertainty scoring function based on the density of uncertainty keywords in a reasoning prefix. Specifically, for a tokenized trace prefix $T[1:\ell]$ of length $\ell>0$,
we define an uncertainty density score $u(T;\ell)$ as:
\begin{equation}
\label{eq:uq_score}
u(T;\ell) = {\#\{\text{arrivals of keywords in } T[1:\ell]\}}/{\ell}.
\end{equation}

Then, this approach computes the maximum of the uncertainty score over the traces in the following way: First, we partition traces into bins of tokens of size $B$, with boundaries $L_j = j \cdot B$, $j=1, \ldots$. For each calibration trace $T^{(i)}$ and boundary $L_j \le |T^{(i)}|$, we compute the score $u(T^{(i)};L_j)$ on the prefix $T^{(i)}[1:L_j]$. 
We then define a global threshold of ``maximal uncertainty" as follows. For each calibration trace $T^{(i)}$, compute
\(
M_i = \max_{j:L_j \le |T^{(i)}|} u(T^{(i)};L_j).
\)
Then, to quantify how large these scores typically get during normal reasoning, we calculate their $(1-\alpha)(1+1/n)$ quantile, for some user-specified $\alpha\in(0,1)$, i.e., let $k=\lceil (n+1)(1-\alpha)\rceil$ and set $\tau^\star=M_{(k)}$ (the $k$-th smallest value).\footnote{When $(1-\alpha)(1+1/n) > 1$, we set the quantile to be $+\infty$} For a new query $T$ at test-time, we monitor $u(T;L_j)$, $j=1, \ldots$ and stop as soon as $u(T,L_j)>\tau^\star$.  
This controls the probability of early stopping on a well-posed query at level $\alpha$.

\begin{algorithm}
\caption{Maxwise Conformal Stopping (Uncertainty-based)}
\label{alg:maxwise}
% \begin{algorithmic}[1]
\begin{algorithmic}[1]
\Require Calibration set $\{T^{(i)}\}_{i=1}^n$, bin size $B$, confidence level $\alpha \in (0,1)$
\Ensure Global stopping threshold $\tau^\star$

\STATE Define bin boundaries $L_j = j \cdot B$ up to max trace length.
\FOR{each calibration trace $T^{(i)}$}
   \STATE Compute $M_i = \max_{j: L_j \le |T^{(i)}|} u(T^{(i)}; L_j)$.
\ENDFOR

\STATE Set threshold
\(
\tau^\star = \text{Quantile}_{(1-\alpha)(1+1/n)}\!\left(\{M_i\}_{i=1}^n\right). 
\)
\STATE \textbf{Return} $\tau^\star$.
% \STATE \hfill
\STATE \textbf{Prediction Phase:}
\STATE For a new trace $T$, after the first bin boundary $L_1$, check uncertainty at bin boundaries.
\STATE Stop if $u(L_j)>\tau^\star$ for some $j$.
\end{algorithmic}
\end{algorithm}

\begin{proposition}
\label{prop:maxwise}
Let $M_i = \max_{j:L_j \le |T^{(i)}|} u(T^{(i)}; L_j)$ for calibration traces, and $M_{n+1}$ for the test trace.  
Define $\tau^\star$ as in Algorithm~\ref{alg:maxwise}. Then
\[
\mathbb{P}\!\left(\exists j:\,u(T^{(n+1)}; L_j) > \tau^\star\right) 
= \mathbb{P}\!\left(M_{n+1} > \tau^\star\right) 
\;\le\; \alpha.
\]
\end{proposition}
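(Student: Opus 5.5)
The plan has two parts: an equality that is a deterministic identity about the max statistic, and an inequality that is the standard split-conformal quantile argument. I would assume the calibration traces $T^{(1)},\dots,T^{(n)}$ and the test trace $T^{(n+1)}$ are exchangeable, for example i.i.d. well-posed traces from the same model. This is the assumption under which the statement is meant.

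\emph{Step 1 (the equality).} Fix a realization of the test trace. The event $\{\exists j:\ L_j\le |T^{(n+1)}|,\ u(T^{(n+1)};L_j)>\tau^\star\}$ is exactly the event $\{\max_{j:L_j\le|T^{(n+1)}|}u(T^{(n+1)};L_j)>\tau^\star\}$. This holds because a finite maximum exceeds a threshold if and only if some term does. The right-hand event is $\{M_{n+1}>\tau^\star\}$ by definition. The index range must be the same on both sides: the monitoring rule only checks boundaries $L_j$ up to the trace length, matching the definition of $M_{n+1}$. Taking probabilities gives the equality. If $\tau^\star=+\infty$, both sides are $0$.

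\emph{Step 2 (exchangeability of scores).} Each $M_i$ is the same fixed measurable function of its own trace $T^{(i)}$. The bin boundaries $L_j=jB$ are fixed in advance and do not depend on the data. Hence exchangeability of the traces transfers to exchangeability of $(M_1,\dots,M_{n+1})$.

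\emph{Step 3 (quantile inequality).} Let $k=\lceil (n+1)(1-\alpha)\rceil$. If $k>n$, then $\tau^\star=+\infty$ and the claim is trivial, so assume $k\le n$. Then $\tau^\star=M_{(k)}$, the $k$-th smallest of the calibration scores.
\begin{itemize}
\item Key observation: $M_{n+1}>M_{(k)}$ (order statistic of the first $n$) implies that $M_{n+1}$ is strictly greater than at least $k$ of the $n+1$ values.
\item Equivalently, the rank of $M_{n+1}$ among all $n+1$ scores, counted with ties broken against it, exceeds $k$.
\item By exchangeability, the rank of $M_{n+1}$ under a uniformly random tie-break is uniform on $\{1,\dots,n+1\}$.
\item The strict inequality only makes the event smaller, so ties can only help.
\end{itemize}
This gives
\[
\mathbb{P}(M_{n+1}>\tau^\star)\le \frac{n+1-k}{n+1}\le \alpha .
\]
The last step uses $k\ge (n+1)(1-\alpha)$.

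\emph{Main obstacle.} The likely sticking point is handling ties rigorously, since the scores are discrete counts divided by lengths and ties are common. I would deal with this in one of two ways. The first is the direct counting argument: $\sum_{i=1}^{n+1}\mathbf 1\{M_i>M^{(n+1)}_{(k)}\}\le n+1-k$ deterministically, where $M^{(n+1)}_{(k)}$ is the $k$-th smallest of all $n+1$ scores. Combined with the inclusion $\{M_{n+1}>M_{(k)}\}\subseteq\{M_{n+1}>M^{(n+1)}_{(k)}\}$ and symmetry of the indicators, this yields the bound. The alternative is to invoke the standard conformal quantile lemma.
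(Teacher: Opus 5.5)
Your proposal is correct and follows the same route as the paper. It uses exchangeability of $(M_1,\ldots,M_{n+1})$, since each $M_i$ is a fixed functional of its own i.i.d.\ trace, and then the standard conformal quantile bound $\mathbb{P}(M_{n+1}>\tau^\star)\le (n+1-k)/(n+1)\le\alpha$; the paper simply cites ``conformal validity'' for that last step, so your treatment of ties, the case $k>n$, and the identity between the ``some bin crosses'' event and $\{M_{n+1}>\tau^\star\}$ supplies details the paper leaves implicit.
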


The proof is deferred to Appendix \ref{appendix:proof}. 

\section{Experiments}
\label{sec:experiments}

\subsection{Experimental Setup}

We outline the models, prompts, and evaluation metrics used in our experiments below. For more details, please refer to Appendix \ref{sec:models_datasets_prompts}.

\paragraph{Models, Instruction Prompts, and Hyperparameters}
We evaluate 12 reasoning models, spanning families such as DeepSeek \cite{guo2025deepseek}, Qwen \cite{yang2025qwen3}, Nemotron \cite{liu2025acereason}, MiMo \cite{xiaomi2025mimo}, and Skywork \cite{he2025skywork}; the full list is in Table \ref{tab:models}. For simplicity, we append the identical zero-shot instruction prompt after the task description and do not apply any model-specific tuning. 

For efficient implementation, we compute the uncertainty scores at intervals of $250$ tokens during generation, and only check the stopping condition at these intervals. In other words, for the Renewal stopping rule, we only compute $Z_t$ if $t$ is a multiple of $250$; meanwhile, for the Maxwise stopping rule, set $B=250$. We also conduct experiments to assess the impact of this hyperparameter. The details are provided in \Cref{sec:ablation}.

% \begin{table}[h]
% \centering
% \begin{tabular}{lll}
% \toprule
% \textbf{Family} & \textbf{Model} & \textbf{Shortname} \\
% \midrule
% \multirow{3}{*}{DeepSeek} & DeepSeek-R1-Distill-Qwen-7B & DeepSeek-7B \\
%  & DeepSeek-R1-Distill-Qwen-14B & DeepSeek-14B \\
%  & DeepSeek-R1-Distill-Qwen-32B & DeepSeek-32B \\
% \midrule
% \multirow{3}{*}{Qwen} & Qwen/QwQ-32B & QwQ-32B \\
%  & Qwen/Qwen3-8B & Qwen3-8B \\
%  & Qwen/Qwen3-14B & Qwen3-14B \\
%  & Qwen/Qwen3-32B & Qwen3-32B \\
% \midrule
% \multirow{2}{*}{Nemotron} & nvidia/AceReason-Nemotron-1.1-7B & Nemotron-7B \\
%  & nvidia/AceReason-Nemotron-14B & Nemotron-14B \\
% \midrule
% \multirow{1}{*}{MiMo} & XiaomiMiMo/MiMo-7B-RL-0530 & MiMo-7B \\
% \midrule
% \multirow{2}{*}{Skywork} & Skywork/Skywork-OR1-7B & Skywork-7B \\
%  & Skywork/Skywork-OR1-32B & Skywork-32B \\
% \bottomrule
% \end{tabular}
% \caption{Models evaluated in our experiments, including their families, full names, and shortnames.}
% \label{tab:models}
% \end{table}

\paragraph{Metrics}

We evaluate each method for \emph{false positive control} and \emph{power/efficiency}. All metrics are averaged across models and datasets within each domain.

\textbf{False Positive Rate (FPR) Control:} Defined as the probability of stopping too early on an well-posed query (i.e. the \emph{early stopping rate on original benchmarks}). Lower values indicate stronger FPR control. For the prompting baselines, there is no explicit early stopping mechanism; instead, we measure FPR by the accuracy drop on well-posed queries compared to no intervention, which could be negative due to the stochastic nature of LLMs.

\textbf{Power / Efficiency:} 
The ability to stop early in ill-posed cases while not truncating useful reasoning otherwise. We measure this using: (a) \emph{Early stopping rate on ill-posed benchmarks} (Power; higher values are better): the frequency of early stopping on ill-posed queries. For the prompting baselines, we use the abstention rate as a proxy. We use an LLM to classify whether the model abstained from answering; the detailed procedure is in Appendix \ref{sec:models_datasets_prompts}. (b) 
\emph{Token savings on ill-posed benchmarks} (Efficiency): the percentage of tokens saved relative to the full trace length.  

\subsection{Baselines}
\paragraph{Prompting Baselines}
We distill two prompting baselines from prior work: 1. Confidence Dampening Prompt: Following 
% \citet{huang2025confqa}, 
\cite{huang2025confqa}, append:  
\textit{``Answer only if you are confident. Otherwise, say `I am not sure.' ''} to encourage abstention. 2. Critical Reasoning Prompt: Following 
% \citet{cui2024or}, 
\cite{ma2024large},  append: 
\textit{``Please solve these problems with criticism. If the problem is reasonable, think step by step and put your final answer in a box. If the problem is unreasonable, highlight the issues clearly and provide a succinct explanation.''} to encourage critique of unreasonable inputs while maintaining structured reasoning.

\paragraph{Length-Based Baseline} Previous work suggests that reasoning models generate longer responses for ill-posed queries \citep{fan2025missing, ma2024large, su2025between}. As a baseline, we therefore limit the trace length uniformly across all examples. Given calibration traces, a length threshold $\tau$ is computed as the $(1-\alpha)(1+1/n)$-quantile of trace lengths $|T^{(i)}|$, $i=1, \ldots, n$ in our calibration dataset. For new queries, generation continues until the length exceeds $\tau$, at which point token generation stops. Under exchangeability conditions, this method effectively controls the false positive rate (FPR)—defined here as the probability of halting too early on a well-posed query \citep{vovk2005algorithmic,laxhammar2011sequential}.\footnote{ For completeness, a proof is provided in Appendix~\ref{sec:Length-conformal_early_stopping}.}

\paragraph{Logit-Based Uncertainty Measures}
While our methods focuses on uncertainty scores derived from keyword arrivals, the Maxwise conformal framework in Algorithm~\ref{alg:maxwise} applies to any scalar-valued sequence of scores observed along the reasoning trajectory. To compare our keyword-based uncertainty signal against these logits-based alternatives, we substitute $u(T;\ell)$ with using (i) the DEER confidence score \citep{yang2025dynamic} and (ii) a beam-search--based entropy over the answer space \citep{yong2025think}.
% Following \citet{yang2025dynamic}, we treat positions immediately before each special \texttt{Wait} token as potential early-exit points.
% At such a point, we invoke the \emph{answer inducer} to generate a trial answer from the current reasoning prefix and compute the DEER confidence \(C\) as the geometric mean of the per-token maximum probabilities of the induced answer (cf. Eq.~(4) in \citealt{yang2025dynamic}).
% We then define a DEER-based uncertainty score as \(u_{\text{DEER}} = 1 - C\).
% For the entropy baseline, we use the same answer inducer and early-exit locations, but replace \(C\) with the beam-search--based entropy of the induced answer distribution, as in \citet{yong2025think}.
% In both cases, the resulting per-prefix uncertainty sequences are fed into the same Maxwise calibration and stopping rule as in Equation~\eqref{eq:uq_score} and Algorithm~\ref{alg:maxwise}.
Implementation details of these variants are provided in Appendix~\ref{sec:logits-based_early_stopping_variants}.

\paragraph{Probing-Based Baseline}
We compare against the probing-based uncertainty detection method proposed by \citet{liu2025answering}, which trains a linear probe on the model's hidden activations to classify whether a query is ill-posed. This is described in detail in Section \ref{sec:comparison_against_probing_methods}.

\subsection{Evaluation on Math Reasoning Tasks}
\begin{figure*}[ht]
    \centering

\begin{tikzpicture}[
  font=\small,
  box/.style={
    rectangle,
    rounded corners=3mm,
    draw=none,
    minimum width=4.2cm,
    minimum height=1.5cm,
    align=center,
    inner sep=6pt,
    blur shadow
  },
  process/.style={box, fill=blue!8},
  eval/.style={box, fill=blue!10},
  arrow/.style={-{Stealth[length=3mm,width=3mm]}, very thick}
]

% Nodes on one horizontal line
\node[process] (keyword) {%
  \textbf{Keyword Extraction}\\[2pt]
GSM8K (train)
};

\node[process, right=1cm of keyword] (calib) {%
  \textbf{Calibrate Stopping Rule}\\[2pt]
GSM8K (cal)
};

\node[eval, right=1cm of calib] (test) {%
  \textbf{Evaluate Stopping Rule}\\[2pt]
  GSM-MC, UMWP, MiP, and MMLU
};

% Arrows left → right
\draw[arrow] (keyword) -- (calib);
\draw[arrow] (calib) -- (test);

\end{tikzpicture}
    \caption{Workflow for extraction, calibration, and testing of the stopping rule.}
    \label{fig:flow}
\end{figure*}
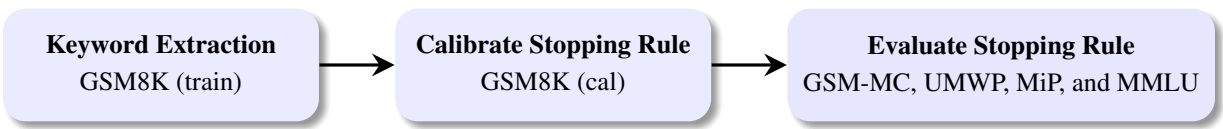
First, we investigate the performance of our proposed stopping rules on math reasoning tasks. 
Additional experiments on scientific reasoning tasks and detailed ablation studies of our methods are provided in \Cref{app: additional_experimental_results}.
Our workflow involves three stages: (1) extracting uncertainty keywords from the training split of GSM8K; (2) calibrating the stopping rule on a separate calibration split of GSM8K to achieve the desired FPR level; (3) evaluating the calibrated stopping rule on multiple independent ill-posed math benchmarks, inclduing GSM-MC, UMWP, MiP, and MMLU (math subsets). The workflow is illustrated in Figure \ref{fig:flow}, and the datasets used are summarized in Table \ref{tab:datasets}. We remark that although theoretical FPR control is guaranteed when perfect exchangeability between the calibration and test sets holds, strictly satisfying this requirement is almost never possible in practice. Therefore, rather than enforcing exact exchangeability, we instead evaluate our method on a diverse collection of math datasets, each constructed with different mechanisms for inducing ill-posed versions, to assess the robustness and practical reliability of our stopping rules. For the well-posed queries, we note that only GSM-MC contains GSM8K problems; the other three datasets are all out-of-distribution math benchmarks.

We visualize the resulting distribution shift in Figure \ref{fig:tsne}. The figure reports empirical quantiles of reasoning trace lengths produced by two different families of reasoning models when responding to well-posed problems from each benchmark. As expected, GSM8K and GSM8K-MC exhibit nearly identical length distributions across all quantiles. In contrast, out-of-distribution benchmarks induce markedly different length profiles. Notably, UMWP consistently yields shorter reasoning traces, particularly for DeepSeek models, while benchmarks such as MMLU and MiP produce much longer traces with heavier upper tails. These observations indicate that reasoning length can shift in both directions under distribution shift and is strongly model-dependent. 

\begin{table*}[ht]
\centering
\small
\begin{tabular}{lcccccccc}
\toprule
 \multirow{2}{*}{Stopping Rule} & \multicolumn{2}{c}{GSM-MC} & \multicolumn{2}{c}{UMWP} & \multicolumn{2}{c}{MiP} & \multicolumn{2}{c}{MMLU} \\
& $\Delta^\downarrow$ Acc & Abstention & $\Delta^\downarrow$ Acc & Abstention & $\Delta^\downarrow$ Acc & Abstention & $\Delta^\downarrow$ Acc & Abstention \\
 \midrule
No Intervention & -- & 6.75\% & -- & 8.95\% & -- & 1.12\% & -- & 0.50\% \\
Confidence & 0.42\% & 5.42\% & -0.26\% & 10.05\% & -0.48\% & 0.48\% & 4.95\% & 12.16\% \\
Criticism & 0.67\% & 5.67\% & 0.26\% & 9.61\% & 0.32\% & 0.96\% & -0.50\% & 15.48\% \\
\midrule 
& FPR & Power & FPR & Power & FPR & Power & FPR & Power \\
\midrule
Length & 4.50\% & 32.17\% & 0.62\% & 8.59\% & 20.67\% & 61.54\% & 12.91\% & 39.85\% \\
DEER & 5.08\% & 37.25\% & 2.12\% & 18.86\% & 16.67\% & 62.34\% & 20.18\% & 52.63\% \\
Entropy & 4.83\% & 44.92\% & 1.32\% & 15.31\% & 19.39\% & 55.45\% & 2.57\% & 15.29\% \\
\textbf{Renewal} & 3.75\% & 69.75\% & 2.92\% & 46.78\% & 1.12\% & 71.15\% & 0.75\% & 66.42\% \\
\textbf{Maxwise} & 5.42\% & 70.92\% & 5.52\% & 48.65\% & 3.21\% & 73.40\% & 2.63\% & 75.31\% \\
\bottomrule
\end{tabular}
\caption{Early stopping rates across math reasoning benchmarks. ``FPR'' = stopping too early on well-posed queries. ``Power'' = stopping on ill-posed queries. Averages are taken over both all models evaluated. Confidence refers to the confidence dampening prompting baseline, and Criticism refers to the critical reasoning prompting baseline. For Confidence and Criticism, FPR is measured by accuracy drop and Power by abstention rate, which can be negative. -0.00\% indicates a very small negative number rounded to two decimal places.}
\label{tab:full-math-results}
\end{table*}

\begin{figure}[ht]
    \centering
    \includegraphics[width=1.0\linewidth]{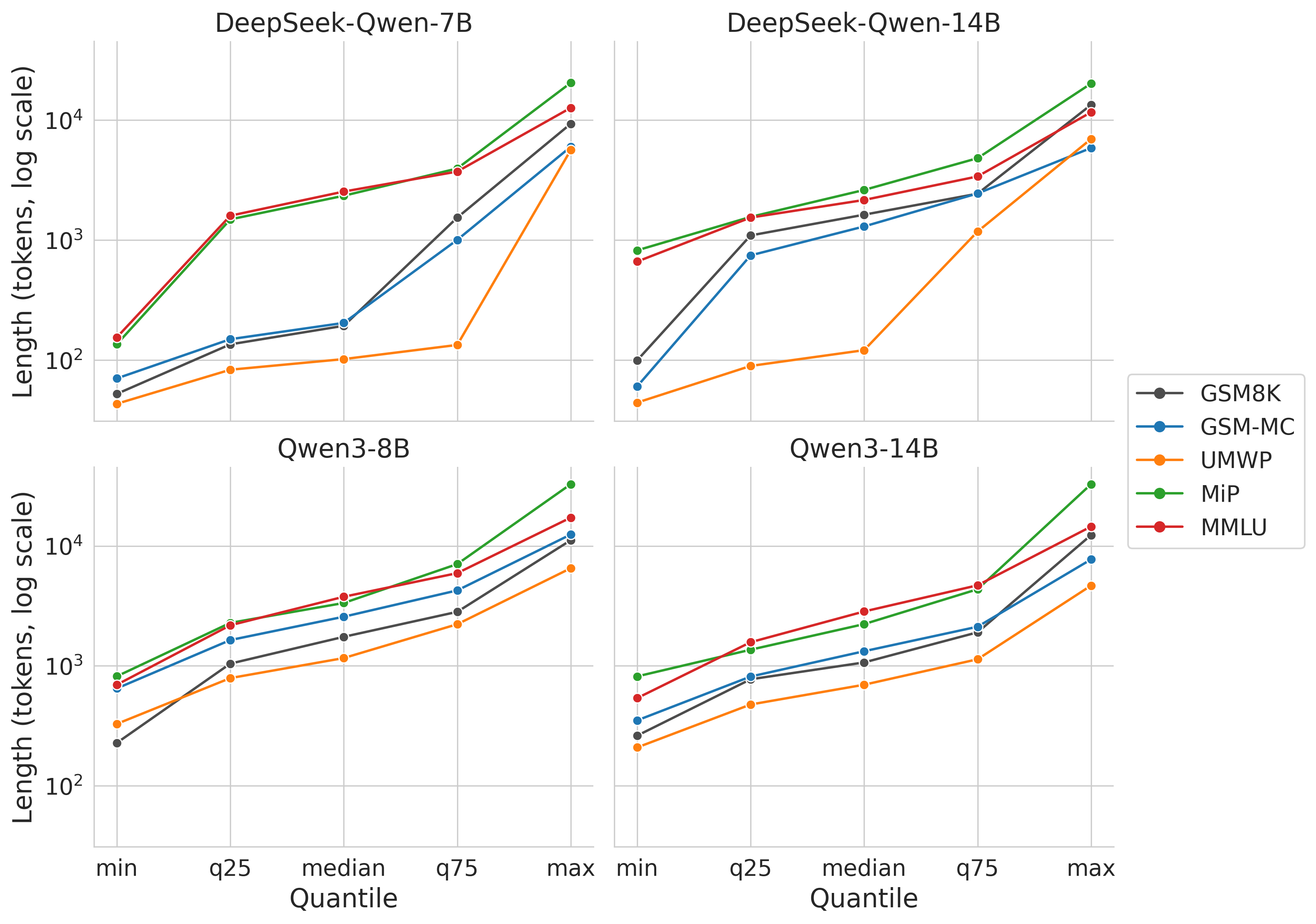}
    \caption{Quantiles of reasoning trace lengths (log scale) for well-posed problems across math benchmarks and two model families. Each panel corresponds to a model, and each curve shows the distribution of trace lengths on a benchmark, summarized by the minimum, 25th percentile, median, 75th percentile, and maximum. Substantial variation across datasets reveals a clear distribution shift in reasoning length, even when all queries are answerable. }
    \label{fig:tsne}
\end{figure}

As shown in Table \ref{tab:full-math-results}, although the prompting baselines achieve good FPR control, they also exhibit almost no power in detecting ill-posed queries. Meanwhile, length-based stopping rule can be sensitive to the distribution shift between calibration and test sets, leading to poor FPR control: as noted earlier, UMWP tends to induce shorter reasoning traces while MiP and MMLU lead to longer ones. This results in under- and over-shooting of the length threshold, respectively, causing excessive false positives or missed opportunities for early stopping. Even for GSM-MC, which is in-distribution with respect to GSM8K, the length-based method only produces moderate power, stopping early on less than one-third of ill-posed queries, suggesting that length alone is not a sufficiently informative signal for early stopping.

\begin{table}[ht]
\centering
\small
\begin{tabular}{lcccc}
\toprule
Stopping Rule & GSM-MC & UMWP & MiP & MMLU \\
\midrule
Renewal & 60.69\% & 38.29\% & 62.78\% & 58.80\% \\
Maxwise & 63.30\% & 41.10\% & 68.15\% & 69.83\% \\
\bottomrule
\end{tabular}
\caption{Average percentage of tokens saved on ill-posed queries. Averages are taken over both all models evaluated.}
\label{tab:token-savings}
\end{table}

In contrast, our proposed uncertainty-driven stopping rules (Maxwise and Renewal) consistently achieve strong FPR control while maintaining high detection power across all math benchmarks. We observe that the Renewal stopping rule generally achieves better FPR control than Maxwise, while Maxwise attains slightly higher detection power. Similarly, we observe a higher token savings on ill-posed queries for Maxwise compared to Renewal, as shown in Table \ref{tab:token-savings}. Similar to the length baseline, both Renewal and Maxwise exhibit lower power on UMWP compared to other benchmarks. This is because of the shorter reasoning traces induced by UMWP, which is often below our choice of $B$ (250 tokens), limiting the opportunities for our method to apply before the reasoning ends. As reflected by the quantile statistics in Figure \ref{fig:short}, a large fraction of UMWP responses, particularly for DeepSeek and Skywork models, terminate before reaching the first evaluation point of the stopping rule, thereby reducing detection power. We further investigate the impact of $B$ in \Cref{sec:ablation}.

\subsubsection{Soft Upper Bound of Detection Power}

It is important to remark that because our method is built on the semantic space of the reasoning traces, there is an inherent limit to the detection power achievable by any keyword-based approach that relies solely on the reasoning traces. If, by reading the reasoning traces, it is not possible to distinguish whether it is responding to an ill-posed or well-posed query, then no method can succeed in this task.

Therefore, to understand the results outlined in the previous section and to assess the effectiveness of our keyword-based uncertainty detection method, we establish a soft upper bound on detection power using an oracle approach. We develop the oracle by recycling the random forest classifier training procedure outlined in Section \ref{sec:keyword_set_construction} and apply cross-validation on the test sets of datasets in Table \ref{tab:datasets}. Specifically, on each evaluation fold, we report the true positive rate corresponding to a false positive rate equal to $\max\{\alpha,\ \widehat{\mathrm{FPR}}\}$, where $\widehat{\mathrm{FPR}}$ is the false positive rate induced by the stopping rule to compare against. We then average this quantity across the two folds. We use two folds because some datasets are small, with as few as 52 samples (MiP in Table \ref{tab:datasets}). We use this approach to estimate the maximum achievable detection power under a supervised setting with knowledge of how ill-posed queries are constructed.\footnote{We emphasize that our upper bound is \emph{soft} because it relies on a learned classifier that may not be optimal.} By comparing our method against this oracle, we can gauge how closely our unsupervised approach approximates the ideal detection capability. 

\begin{figure}
    \centering
    \includegraphics[width=1.0\linewidth]{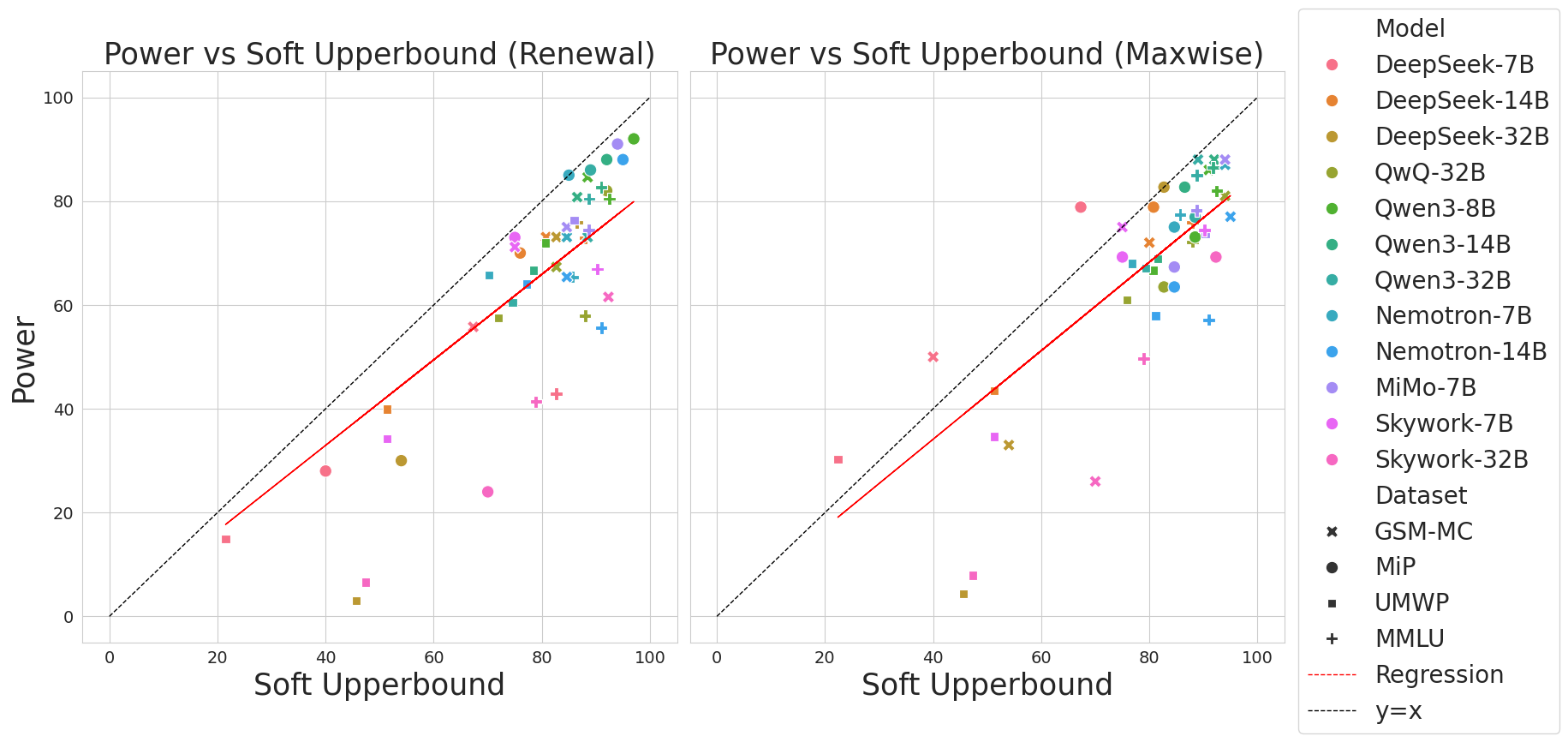}
    \caption{Comparison of early stopping rates between our proposed methods and the oracle upper bound across math reasoning benchmarks, where different shapes and colors correspond to different datasets and models, respectively. The regression slopes are 0.8238 and 0.8532, respectively, for Renewal and Maxwise.}
    \label{fig:oracle_comparison}
\end{figure}

As illustrated in Figure \ref{fig:oracle_comparison}, our proposed uncertainty-based stopping rules achieve detection power that approaches the oracle upper bound across all math reasoning benchmarks (close to the 45-degree line). We run linear regression without intercept to quantify this relationship, finding slopes of 0.8238 and 0.8532 for Renewal and Maxwise respectively. This indicates that on average, our methods achieve approximately 82-85\% of the oracle's detection power. 

\subsubsection{Comparison with DEER- and entropy-based stopping}

We next compare our keyword-based Maxwise rule to logits-based early stopping methods instantiated under the same Maxwise framework. As shown in Table \ref{tab:full-math-results}, our keyword-based Maxwise rule consistently achieves competitive detection power compared to both DEER- and entropy-based methods across all math reasoning benchmarks. More importantly, DEER and entropy exhibit substantially higher FPR on MiP and MMLU, compared to our keyword-based Maxwise rule, pointing to the greater brittleness of logits-based uncertainty signals under distribution shift in the query space. This gap is especially striking given that all three methods share the same conformal calibration and stopping logic; the only difference is the underlying uncertainty signal. 

% \begin{table*}[h]
% \centering
% \small
% \begin{tabular}{lcccccccc}
% \toprule
%  \multirow{2}{*}{Stopping Rule} & \multicolumn{2}{c}{GSM-MC} & \multicolumn{2}{c}{UMWP} & \multicolumn{2}{c}{MiP} & \multicolumn{2}{c}{MMLU} \\
%  & FPR & Power & FPR & Power & FPR & Power & FPR & Power \\
% \midrule
% DEER & 5.08\% & 37.25\% & 2.12\% & 18.86\% & 16.67\% & 62.34\% & 20.18\% & 52.63\% \\
% Entropy & 4.83\% & 44.92\% & 1.32\% & 15.31\% & 19.39\% & 55.45\% & 2.57\% & 15.29\% \\
% Maxwise & 5.67\% & 70.58\% & 4.57\% & 48.57\% & 2.56\% & 74.52\% & 2.32\% & 74.19\% \\
% \bottomrule
% \end{tabular}
% \caption{Early stopping rates for DEER-based, entropy-based, and keyword-based Maxwise stopping rules on math reasoning benchmarks. ``FPR'' = stopping too early on well-posed queries. ``Power'' = stopping on ill-posed queries. \ys{merge with \Cref{tab:full-math-results}}}
% \label{tab:deer-entropy-results}
% \end{table*}

\subsubsection{Comparison against Probing Methods}
\label{sec:comparison_against_probing_methods}

In this section, we compare our uncertainty-based stopping rules against the linear probing method proposed by \citet{liu2025answering}, which uses the output of the multi-head attention before the residual connection as the input to the linear probe.  Because choosing the optimal layer to probe is an art in itself, we only evaluate on four models, whose optimal layers were identified by \citet{liu2025answering}. 
Full details of dataset construction and probe training are deferred to Appendix~\ref{app:linear_probing}.

\begin{table*}[ht]
\centering
\small
\begin{tabular}{llcccccccc}
\toprule
 \multirow{2}{*}{Model} & Stopping & \multicolumn2{c}{GSM-MC} & \multicolumn2{c}{UMWP} & \multicolumn2{c}{MiP} & \multicolumn2{c}{MMLU} \\
 & Rule & FPR & Power & FPR & Power & FPR & Power & FPR & Power \\
\midrule
DeepSeek & Probe & 3.00\% & 64.00\% & 4.39\% & 56.14\% & 30.77\% & 88.46\% & 30.83\% & 97.74\% \\
-14B & PLS & 5.00\% & 68.00\% & 6.14\% & 51.75\% & 7.69\% & 84.62\% & 2.26\% & 87.22\% \\
 & \textbf{Renewal} & 5.00\% & 70.00\% & 2.19\% & 39.91\% & 1.92\% & 73.08\% & 0.75\% & 72.93\% \\
 & \textbf{Maxwise} & 9.00\% & 72.00\% & 2.63\% & 43.42\% & 3.85\% & 78.85\% & 3.01\% & 75.94\% \\
\cline{1-10}
Qwen3 & Probe & 6.00\% & 88.00\% & 2.63\% & 67.98\% & 40.38\% & 94.23\% & 40.60\% & 95.49\% \\
-14B & PLS & 11.00\% & 99.00\% & 13.60\% & 89.91\% & 19.23\% & 98.08\% & 12.78\% & 95.49\% \\
 & \textbf{Renewal} & 4.00\% & 88.00\% & 2.63\% & 66.67\% & 3.85\% & 80.77\% & 1.50\% & 82.71\% \\
 & \textbf{Maxwise} & 4.00\% & 88.00\% & 6.58\% & 68.86\% & 5.77\% & 82.69\% & 6.02\% & 86.47\% \\
\bottomrule
\end{tabular}
\caption{Early stopping rates comparing probing-based methods \citep{liu2025answering} against our proposed uncertainty-based stopping rules on math reasoning benchmarks for 14B-scale models. 
% ``FPR'' = stopping too early on well-posed queries. ``Power'' = stopping on ill-posed queries.
}
\label{tab:probing-comparison}
\end{table*}

During inference, we apply the trained probe to the token-level activations generated during the reasoning process. To ensure fair comparison with our uncertainty-based stopping rules, we probe the model's activations at intervals of 250 tokens. If at any point the probe predicts that the question is ill-posed with probability greater than a threshold $\tau$, we stop the generation early. We tune $\tau$ on the GSM8K calibration set: to achieve the desired FPR level $\alpha$, we first take the maximum among the predicted probabilities for each reasoning trace corresponding to a well-posed question, and then set $\tau$ to be the $(1-\alpha)$-quantile of these maximum predicted probabilities.

Evaluation results for \texttt{Qwen3-14B} and \texttt{DeepSeek-14B} are reported in Table~\ref{tab:probing-comparison}, with additional results for smaller models deferred to Appendix~\ref{app:linear_probing}. As shown in Table \ref{tab:probing-comparison}, the probing-based method can achieve both reasonable FPR control and high detection power when the distribution shift between calibration and test sets is small (e.g., GSM-MC and UMWP). However, when the distribution shift is larger (e.g., MiP, MMLU), the probing-based method suffers from significant FPR inflation, leading to a high rate of premature stopping on well-posed queries. This highlights the drawback of linear probing: the hidden activations may be comprised of the superposition of multiple kinds of information, and the probe may inadvertently latch onto spurious correlations that do not generalize well under distribution shifts, making conformal FPR control challenging.

In contrast, our proposed uncertainty-based stopping rules consistently maintain strong FPR control while achieving competitive detection power across all benchmarks, demonstrating their robustness to distribution shifts.

To further examine whether hidden activations encode multiple entangled factors of variation, we replace the linear probe with a partial least squares (PLS) regression model restricted to a single latent component. PLS explicitly seeks a one-dimensional direction that maximally covaries with the ill-posedness label, thereby isolating the dominant linear signal available for discrimination. We remark PLS is used purely as a diagnostic tool rather than a competitive classifier. As shown in Table~\ref{tab:probing-comparison}, PLS substantially reduces FPR on out-of-distribution datasets MiP and MMLU compared to linear probing, supporting the superposition hypothesis that probe performance may rely on fragile combinations of entangled signals. Further discussion is deferred to Appendix~\ref{app:linear_probing}.

\section{Discussion and Conclusion}
\label{sec:discussion}

We introduce two efficient mechanisms for truncating reasoning traces based on detecting uncertainty cues, with theoretical false-positive control under exchangeability assumptions. Empirically, our methods achieve reliable early stopping on ill-posed queries while maintaining low false-positive rates across diverse datasets spanning math and scientific domains. In contrast, the prompting and length-based baselines fail to produce meaningful improvements. Our results are thus complementary to previous findings \cite{kirichenko2025abstentionbench,fan2025missing} that ill-posed problems inherently trigger longer reasoning. Instead, our data show that some models may not necessarily generate longer traces when problems are ill-posed, making length an unreliable stopping signal and requiring more nuanced uncertainty-based strategies. Moreover, when distribution shifts occur between calibration and test sets, length-based thresholds can become miscalibrated, leading to elevated false-positive rates or reduced power. This directly supports the motivation that models lack principled ways to regulate their own reasoning depth, and that overthinking cannot be reliably mitigated through length alone.

Across tasks, our uncertainty-driven stopping rules achieve strong false-positive control while offering substantial token savings on ill-posed queries. These improvements arise despite the methods relying solely on explicit reasoning text, without access to hidden states or further supervised training on negative examples. The resulting mechanism is therefore lightweight and compatible with black-box LLM APIs. On the other hand, because our methods operate purely on the reasoning text, their achievable power is fundamentally limited by the information about ill-posedness that is expressed in the trace. When a model’s reasoning does not reflect ambiguity or missing information in a way that is distinguishable from normal deliberation, no trace-only detector can succeed. 
To contextualize performance, we introduce an oracle upper bound and show that our methods recover approximately 82-85\% of the achievable power on math benchmarks, indicating that a large fraction of discriminative signal is captured by simple keyword-arrival statistics.
Finally, our approach relies on the bin size parameter. Infrequent checks improve efficiency but can reduce power when traces terminate early. Developing an adaptive checking schedule could be a future direction to balance computational cost and detection accuracy.

Our work focuses on the design of statistically principled stopping criteria, which we view as a module within a deployment pipeline. Once a stopping event is triggered, different downstream policies can be applied depending on the application context. The design of downstream policies may require a systematic study of how these strategies affect user experience. In practice, premature stopping or excessive abstention may affect user experience. Careful choice of level $\alpha$, or additional safeguards and human oversight, should be considered in deployment for high-stakes use cases.

\section*{Acknowledgements}
Yangxinyu Xie was supported by Laboratory Directed Research and Development (LDRD) funding from Argonne National Laboratory, provided by the Office of Science of the U.S. Department of Energy under Contract No. DE-AC02-06CH11357. Tanwi Mallick was supported by the U.S. Department of Energy, Office of Science, Advanced Scientific Computing Research, through the SciDAC-RAPIDS3 institute under Contract DE-AC02-06CH11357. Tao Wang, Soham Mallick, Yan Sun, Georgy Noarov, Mengxin Yu, and Edgar Dobriban were supported in part by the U.S. National Science Foundation (NSF), the Army Research Office (ARO), the Air Force Office of Scientific Research (AFOSR), the Office of Naval Research (ONR), the Simons Foundation, and the Alfred P. Sloan Foundation.
% Bibliography
\bibliographystyle{plainnat}
\bibliography{ref}

% Appendix
\appendix
\section{Implementation Details}

This section provides additional technical details that complement the methods described in the main text. These details are intended to help readers reproduce the experiments and understand the rationale behind specific design choices.

\subsection{Uncertainty Keyword Set Construction}
\label{app: uncertainty_keyword_set_construction}

We construct a lexicon of uncertainty keywords using a semi-supervised approach that combines feature extraction and interpretability-based filtering. 
% For implementation details not included in the following, we refer the reader to Appendix~\ref{app: uncertainty_keyword_set_construction}.

\subsubsection{Semi-Supervised Keyword Identification}

We begin by creating a small training set. This set consists of reasoning trace pairs from four models evaluated on the same 200 questions from the GSM8K subset of AbstentionBench \citep{kirichenko2025abstentionbench}. The four models are \texttt{Qwen3-8B}, \texttt{Qwen3-14B} \citep{qwen2025qwq32b}, \texttt{DeepSeek-R1-Distill-Qwen-7B} and \texttt{DeepSeek-R1-Distill-Qwen-14B} \citep{guo2025deepseek}.\footnote{Here we only choose these four models for initial keyword extraction; we later demonstrate that the constructed keyword set generalizes to many other models (see Section~\ref{sec:experiments}). However, in practice, the practitioner may choose to use their target model for keyword extraction as well to optimize performance.} This yields 800 pairs total (200 per model), where each pair comprises a model's reasoning trace responding to the original question and its trace for the corresponding  unanswerable variant. The unanswerable variants in the GSM8K subset of AbstentionBench are constructed through straightforward context-removal: the method identifies and extracts only the final question from multi-sentence problem statements, removing all preceding contextual information necessary for solving the problem; an example is shown in Figure~\ref{fig:context_removal}. 

\begin{figure}[t]
\centering
\begin{tcolorbox}[colback=gray!5, colframe=black!75, title=Construction of Unanswerable Variants]
\textbf{Original (Answerable):}\\
``Janet's ducks lay 16 eggs per day. She eats three for breakfast every morning and bakes muffins for her friends every day with four. She sells the remainder at the farmers' market daily for \$2 per fresh duck egg. How much in dollars does she make every day at the farmers' market?''

\vspace{0.5em}
\textbf{Unanswerable Variant (Context Removed):}\\
``How much in dollars does she make every day at the farmers' market?''
\end{tcolorbox}
\caption{Example of context removal in AbstentionBench's GSM8K subset.}
\label{fig:context_removal}
\end{figure}

% This subsection elaborates on the process of constructing the uncertainty keyword set in Section~\ref{sec:methods}. We only apply the following procedure to help extract meaningful keywords for our methodology; we do not apply the following at test time, when we use the random forest classifier to estimate the soft upper bounds, as the following could introduce bias that do not necessarily improve performance.

We train a random forest classifier with $k$-gram-bag-of-words features ($k=2,3,4$) to discriminate between the two types of traces. For simplicity and interpretability, we choose to extract the features in the word space rather than the token space used during language model decoding. After removing punctuation and stopwords, we lowercase the traces and extract all $k$-grams to form a high-dimensional sparse matrix of feature counts. We exclude negations (e.g., ``not'', ``n't'') from the stopword list to retain their important role in expressing uncertainty.

Before training the random forest classifier, we attempt to mitigate the artifact of nested n-grams: if a long n-gram is effectively a “fixed phrase,” many of its sub-grams can become redundant for interpretation because they never appear independently. For example, the $2$-gram ``knowing exact'' may never appear without the $3$-gram ``without knowing exact'' in the dataset. To address this, we apply the following filtering procedure to the candidate keyword set:
\begin{enumerate}
    \item \textbf{Removal of perfectly nested sub-grams.} We first identify contiguous sub-grams that never occur independently of a longer n-gram. Concretely, if a shorter n-gram and a longer n-gram that contains it appear in exactly the same set of documents, we treat the shorter n-gram as redundant and remove it. 
    \item \textbf{Negation-parity-aware thinning across n-gram lengths.} We further thin the candidate set by comparing shorter n-grams to longer n-grams that contain them. For each such pair, we estimate a document-level conditional co-occurrence score that measures how often the longer n-gram appears whenever the shorter one appears. To generalize the last step, co-occurrence is aggregated over multiple (at most 5) containing longer n-grams, and an n-gram is removed only when the aggregated score exceeds a fixed threshold. This yields a conservative pruning rule that reduces redundancy while preserving semantically meaningful distinctions induced by negation. Intuitively, a high score (we use $\ge 0.8$) indicates that the two phrases are rarely observed independently and are therefore semantically redundant.

    Crucially, this thinning step is conditioned on the \emph{negation parity} of the two n-grams. We define an n-gram as \emph{negating} if it contains an explicit negation token (e.g., ``not'', ``no''). Two cases are considered:
    \begin{itemize}
        \item \emph{Case A (matched negation status).} If the shorter n-gram and the longer n-gram have the same negation status (both negating or both non-negating) and their conditional co-occurrence is high, we discard the longer n-gram and retain the shorter one for interpretability. For example, if ``since cannot access'' often show up in one of the longer phrases like ``since cannot access external,'' ``since cannot access prior,'' or ``included since cannot access,'' we discard these longer phrases and keep the shorter one.
        \item \emph{Case B (mismatched negation status).} If the shorter n-gram is non-negating while the longer n-gram is negating, and their conditional co-occurrence is high, we discard the shorter n-gram. In this case, the shorter phrase is typically observed only within a negated construction, and retaining it would obscure the semantic distinction introduced by negation. For example, if ``provide enough information'' often co-occur with ``not provide enough information'', we discard the former and retain the latter to preserve interpretability.
    \end{itemize}
    
\end{enumerate}

We apply 5-fold stratified cross-validation with shuffling to obtain five random forest models, each trained with 500 trees and class-balanced weighting, and evaluate performance on the held-out fold. To avoid data leakage, all folds are contained within the same training set of 800 traces, which is different from the calibration and test sets used in later stages. To identify informative features, we collect the feature importance scores from each fold and select, within each fold, all $k$-grams whose importance exceeds the mean importance across features in that fold. We then take the intersection of these selected feature sets across all five folds, retaining only features that are consistently important across cross-validation splits. As a result, we identify 740 $k$-gram features that are robustly important for classification.

% We apply 5-fold stratified cross-validation with shuffling to obtain five random forest models, each trained on $k$-gram-bag-of-words features ($k=2,3,4$) to discriminate between the two types of traces. To identify informative features, we collect the feature importance scores from each fold and select, within each fold, all $k$-grams whose importance exceeds the mean importance across features in that fold. We then take the intersection of these selected feature sets across all five folds. As a result, we identify 740 $k$-gram features that are robustly important for classification. 
% The reader may find more details on text processing in Appendix~\ref{app: uncertainty_keyword_set_construction}.

\subsubsection{Categorizing the Keyword Set}

To enhance the interpretability of the k-gram features identified in the last section, we filter them based on some pre-defined semantic categories.
We begin by extracting the subset of $k$-gram features that are readily interpretable as uncertainty-related expressions. To do this, we organize the features into three primary semantic categories that capture common ways a model may express uncertainty: \emph{Impossibility}, \emph{Speculation} and \emph{Insufficiency}. This is motivated by our intuition that insufficiency is the most diagnostic of ill-posedness.

We assign each extracted feature to one of these three categories using a keyword-based filtering approach. For each category, we first specify a set of characteristic lexical stems and phrases that reflect its characteristic type of uncertainty (e.g., words referring to missing information for Insufficiency). Then, for every extracted feature, we assign it to a category whenever it contains a substring matching any of these characteristic terms, allowing for partial word matches (e.g., ``assum'' for ``assume'', ``specif'' for ``specific''). The words used for this assignment are listed in Table~\ref{tab:filtering_terms}. This yields three primary sets $\mathcal{K}_{\mathrm{imp}}, \mathcal{K}_{\mathrm{spec}}, \mathcal{K}_{\mathrm{ins}}$, and we denote the full uncertainty keyword set by $\mathcal{K} = \mathcal{K}_{\mathrm{imp}} \cup \mathcal{K}_{\mathrm{spec}} \cup \mathcal{K}_{\mathrm{ins}}$. In addition, we identify two auxiliary keyword groups, \emph{Epistemic Uncertainty} and \emph{Transition} to capture generic expressions often used to signal shifts in reasoning or uncertainty in prior work \citep{fu2025efficiently, yang2025dynamic} (e.g., “maybe,” “perhaps,” “alternatively,” “wait”). The auxiliary sets $\mathcal{K}_{\mathrm{epi}}$ and $\mathcal{K}_{\mathrm{trans}}$ are excluded from $\mathcal{K}$ and only used in ablation experiments (see \Cref{sec:ablation}) to assess their impact on our stopping methods.

The example keywords in each category are summarized in Table~\ref{tab:example_keywords}. Overall, our categorization procedure covers 102 of the 740 extracted features (13.78\%), with the remaining features consisting primarily of neutral or task-specific phrases (e.g., ``play words,'' ``need find,'' ``original problem,'' ``no problem'') that do not directly correspond to any uncertainty type. Among the 102 categorized keywords\footnote{The sum of the number of keywords in the first three categories in Table~\ref{tab:example_keywords} is 104. This is because some keywords belong to multiple categories, and the categories are not intended to be mutually exclusive.}, the majority (53.85\%) falls into the Insufficiency category.

\begin{table}[ht]
\centering
\small
\begin{tabular}{p{0.19\linewidth} p{0.75\linewidth}}
\toprule
Category & Filtering Terms Used for Categorization \\
\midrule
Impossibility & not possible, impossible, cannot, hard \\
Speculation & guess, assum, forgot, intend \\
Insufficiency & missing, insufficient, incomplete, without, additional, lack, no data, no info, not helpful, not give, not specif, not recall, not provide, not include, not enough, not access, absence, ambiguous, vague \\
Epistemic Uncertainty & maybe, perhaps \\
Transition & alternatively, wait \\
\bottomrule
\end{tabular}
\caption{Words used to assign extracted $k$-gram features to uncertainty categories. }
\label{tab:filtering_terms}
\end{table}

\begin{table*}[ht]
\centering
\scalebox{0.8}{
\small
\begin{tabular}{l l c c}
\toprule
Category & Example Keywords & \# & \% \\
\midrule
$\mathcal{K}_{\mathrm{imp}}$ & cannot determine, data cannot, answer cannot, info cannot, cannot know & 34 & 32.69\% \\
$\mathcal{K}_{\mathrm{spec}}$ & forgot include, perhaps intended, user intended, intended answer, educated guess & 14 & 13.46\% \\
$\mathcal{K}_{\mathrm{ins}}$ & without specifics, missing information, consider additional, cannot provide, not specify & 56 & 53.85\% \\
$\mathcal{K}_{\mathrm{epi}}$ & maybe missing, question perhaps, maybe common, maybe made, answer maybe & 60 & -\% \\
$\mathcal{K}_{\mathrm{trans}}$ & wait could, hmm wait, guess alternatively, wait without, check wait & 50 & -\% \\
\bottomrule
\end{tabular}
}
\caption{Example uncertainty keywords in each category, along with their counts and proportions in the full keyword set $\mathcal{K}$. For the full list of uncertainty keywords, please refer to our source code.
% in the supplementary material.
}
\label{tab:example_keywords}. 
\end{table*}

\subsection{Keyword Detection Algorithm}

\label{app: keyword_detection_algorithm}

Here, we provide a detailed explanation of the trie-based keyword detection algorithm, which is referenced in Section~\ref{sec:methods}. This algorithm enables efficient and accurate identification of uncertainty expressions in reasoning traces, ensuring that the analysis remains computationally feasible even for large datasets.

Our detection procedure implements a greedy longest-match strategy using a token-based trie structure. Given a preprocessed trace $T$\footnote{The preprocessing includes remove punctuation, normalize whitespace, convert text to lowercase, and eliminate common stopwords.} and keyword set $K$, we construct a trie where each path from root to an end node represents a multi-word keyword phrase. At each token position $i$, we traverse the trie to find the longest matching keyword sequence starting at $i$. Upon detecting a match of length $\ell$ tokens, we record position $i$ and advance by $\max(\ell, \delta)$ tokens, where $\delta = 5$ is a minimum gap parameter that prevents overlapping detections and ensures temporal spacing between renewal events. This greedy approach prioritizes longer, more specific uncertainty expressions (e.g., ``insufficient details'' over ``insufficient'') while maintaining computational efficiency through single-pass scanning. The gap $\delta$ also suppresses spurious co-occurrences of adjacent uncertainty expressions, such as ``alternatively maybe'' and ``problem lack,'' that can arise in a reasoning trace.

The trie-based matching algorithm operates in $O(n \cdot \ell_{\max})$ time, where $\ell_{\max}$ is the maximum keyword phrase length. In practice, this enables real-time monitoring of streaming reasoning traces with negligible overhead compared to the generation cost itself.

\subsection{Models, Datasets, and Prompts}
\label{sec:models_datasets_prompts}

\begin{table}[ht]
\centering
\begin{tabular}{lll}
\toprule
\textbf{Family} & \textbf{Model} & \textbf{Shortname} \\
\midrule
\multirow{3}{*}{DeepSeek} & DeepSeek-R1-Distill-Qwen-7B & DeepSeek-7B \\
 & DeepSeek-R1-Distill-Qwen-14B & DeepSeek-14B \\
 & DeepSeek-R1-Distill-Qwen-32B & DeepSeek-32B \\
\midrule
\multirow{3}{*}{Qwen} & Qwen/QwQ-32B & QwQ-32B \\
 & Qwen/Qwen3-8B & Qwen3-8B \\
 & Qwen/Qwen3-14B & Qwen3-14B \\
 & Qwen/Qwen3-32B & Qwen3-32B \\
\midrule
\multirow{2}{*}{Nemotron} & nvidia/AceReason-Nemotron-1.1-7B & Nemotron-7B \\
 & nvidia/AceReason-Nemotron-14B & Nemotron-14B \\
\midrule
\multirow{1}{*}{MiMo} & XiaomiMiMo/MiMo-7B-RL-0530 & MiMo-7B \\
\midrule
\multirow{2}{*}{Skywork} & Skywork/Skywork-OR1-7B & Skywork-7B \\
 & Skywork/Skywork-OR1-32B & Skywork-32B \\
\bottomrule
\end{tabular}
\caption{Models evaluated in our experiments, including their families, full names, and shortnames.}
\label{tab:models}
\end{table}

\begin{table}[ht]
\centering
\small
\begin{tabular}{
    p{0.1\linewidth}  % Dataset
    p{0.025\linewidth}  % Train
    p{0.025\linewidth}  % Cal
    p{0.025\linewidth}  % Test
    p{0.7\linewidth}  % Description
}
\toprule
\textbf{Dataset} & \multicolumn{3}{c}{\textbf{Size}} & \textbf{Description} \\
\cmidrule(lr){2-4}
 & train & cal & test & \\
\midrule
GSM8K & 200 & 200 & -- & Grade-school math word problems \citep{cobbe2021training}. Following \cite{kirichenko2025abstentionbench}, we keep only the final question from multi-sentence problem statements for the unanswerable queries. \\
GSM-MC & -- & -- & 100 & GSM8K problems with one key variable deliberately removed \citep{wang2025beyond}. We remove questions in the other GSM8K set to avoid data leakage and then sample 100 problems from the remaining. \\
UMWP & -- & -- & 228 & Unanswerable Math Word
Problem \citep{sun2024benchmarking} built on SVAMP, MultiArith, GSM8K, and ASDiv. We exclude GSM8K problems from this set to focus on out-of-distribution evaluation. We sample at most 50 questions from each of the five unanswerable categories: key information missing, ambiguous key information, unrealistic conditions, unrelated object, question missing. Like GSM8K, all SVAMP, MultiArith and ASDiv problems cover elementary arithmetic operations, but these three datasets generally involve fewer reasoning steps than GSM8K.\\
MiP & -- & -- & 52 & Ill-posed questions with missing premises \citep{fan2025missing}. We take the subset derived from MATH 500 dataset \citep{hendrycks2021measuring}, which includes problems on probability, algebra, and trigonometry.\\
MMLU & -- & -- & 133 & Math subsets (college math, abstract algebra, high school math) \citep{hendrycks2020measuring}. Unanswerable queries are created in the same way as GSM8K. \\
\bottomrule
\end{tabular}
\caption{Benchmark subsets used in our experiments.}
\label{tab:datasets}
\end{table}

\begin{table}[ht]
\centering
\small
\begin{tabular}{
    p{0.1\linewidth}  % Dataset
    p{0.025\linewidth}  % Test
    p{0.7\linewidth}  % Description
}
\toprule
\textbf{Dataset} & \textbf{Size} & \textbf{Description} \\
\midrule
GPQA   & 286 & Graduate-level, Google-proof QA \citep{rein2024gpqa}. \citet{kirichenko2025abstentionbench} applies the same procedure for GSM8K to create ill-posed queries.\\
HLE    & 152 &  We select the non-math multiple-choice questions from the Humanity's Last Exam dataset \citep{phan2025humanity}. The same procedure for GSM8K is applied to create ill-posed queries.\\
\bottomrule
\end{tabular}
\caption{Scientific and medical reasoning datasets used in our experiments.}
\label{tab:sci_datasets}
\end{table}

This subsection outlines the experimental setup, including the models, datasets, and prompts used in our study. These details expand on the experimental design described in Section~\ref{sec:experiments}, providing the necessary context for interpreting the results.

Inference is performed using \texttt{vLLM} on four NVIDIA A100-SXM4 GPUs (40GB VRAM each). We set \texttt{max\_new\_tokens} = 32{,}768 with default temperature and decoding method defined by model providers. For multiple choice problems, the prompt is formatted as follows:
\begin{tcolorbox}[colback=gray!10!white,colframe=gray!80!black,title=Prompt Example]
<Original Question>\\
\texttt{Please reason step by step, and put your final answer within 
\textbackslash boxed\{\}, e.g., Answer: \textbackslash boxed\{C\}}
\end{tcolorbox}
Some math benchmark datasets are not multiple choice. For these, we replace `C' with `45' in the above prompt above.

For prompting baselines \citep{huang2025confqa, ma2024large}, we use the \texttt{Qwen3-8B} model and the following prompt adapted from \cite{kirichenko2025abstentionbench} to detect whether a model abstain in its final answer after reasoning is completed:

\begin{tcolorbox}[colback=gray!10!white,colframe=gray!80!black,title=Abstention Detection Prompt]
You are a classifier for detecting abstention behavior in large language models (LLMs). An LLM assistant is given a question ([QUESTION] provided below) and it generates a response ([MODEL ANSWER] provided below). Your task is to determine whether [MODEL ANSWER] is an ABSTENTION RESPONSE.\\
An example of abstention is saying "I don't know", "I am not sure" or "I can't answer" but can also appear in other expressions of lack of knowledge, high uncertainty, making assumptions, or pointing out the issues in the user's questions. Abstention behavior is desired in questions which don't have an answer or cannot be answered \textbf{directly}. If the model answers the question \textbf{as it is explicitly written}, without sufficiently expressing caveats or uncertainty - this response is NOT considered an abstention. Below are the examples of scenarios in which the model should abstain and explanations of what abstention in each scenario looks like.\\
Given a [QUESTION] and [MODEL ANSWER], determine whether a model's answer [MODEL ANSWER] is an abstention response.\\
Is the model’s answer in [MODEL ANSWER] an abstention response? Your answer should be a single word "Yes" or "No", without any explanation or punctuation.\\
\text{[QUESTION]:} <Original Question>\\
\text{[MODEL ANSWER]:} <Model's Generated Answer>\\
Answer:
\end{tcolorbox}

\subsection{Length-Based Early Stopping}
\label{sec:Length-conformal_early_stopping}

The length-based early stopping procedure described here complements the stopping criteria discussed in Section~\ref{sec:methods}. The algorithm and its theoretical guarantees ensure that the reasoning process halts with controlled false positive rates, conditioning on the exchangeability of reasoning traces.

\begin{algorithm}
\caption{Conformal Stopping Threshold for Reasoning Traces (Length-based)}
\label{alg:length-based}
\begin{algorithmic}[1]
\Require Calibration set $\{(X_i, T^{(i)})\}_{i=1}^{n}$, confidence level $\alpha \in (0,1)$
\Ensure Estimated stopping threshold $\hat\tau$

% \STATE Initialize list of stopping steps $S \gets [\ ]$

\FOR{each $(X_i, T^{(i)})$ in calibration set}
   \STATE $\ell_i \gets |T^{(i)}|$ 
   \hfill \textit{$\triangleright$ Compute the length of each reasoning trace}
   % \Comment{Compute the length of each reasoning trace}
\ENDFOR

\STATE Sort the lengths in ascending order: $\ell_{(1)} \le \ell_{(2)} \le \dots \le \ell_{(n)}$
\STATE Let $k = \lceil (n+1)(1-\alpha) \rceil$
\STATE Set the threshold $\hat\tau = \ell_{(k)}$
\STATE \textbf{Return} $\hat\tau$

\STATE \hfill
\STATE \textbf{Prediction Phase:}
\STATE For a new input $X$, generate reasoning trace $T = (t_1, t_2, \ldots)$ token by token.
\STATE Once the length of the generated trace exceeds $\hat\tau$, stop. Otherwise, continue generating.

\end{algorithmic}
\end{algorithm}

\begin{proposition}
\label{prop:length}
Assume $(\ell_1,\ldots,\ell_n,\ell_{n+1})$ are exchangeable, where $\ell_i$ is the reasoning trace length for query $X_i$. 
Let 
$
\hat\tau = \ell_{(k)}, k = \left\lceil (n+1)(1-\alpha)\right\rceil.
$
Then
$
\mathbb{P}\!\left(\ell_{n+1} > \hat\tau \right) \;\le\; \alpha.
$
Equivalently, the false positive rate of stopping too early on a well-posed query is controlled at level $\alpha$.
\end{proposition}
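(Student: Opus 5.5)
The plan is the standard split-conformal rank argument applied to the lengths. First, dispose of the trivial case: if $(1-\alpha)(1+1/n)>1$, i.e.\ $k=\lceil (n+1)(1-\alpha)\rceil>n$, the convention is $\hat\tau=+\infty$, so $\mathbb{P}(\ell_{n+1}>\hat\tau)=0\le\alpha$. From now on assume $k\le n$.

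The key step is to pass from the calibration order statistic to the order statistic of the full augmented sample. Let $\ell_{(1)}^{+}\le\dots\le\ell_{(n+1)}^{+}$ be the sorted values of $(\ell_1,\dots,\ell_{n+1})$. I will prove the deterministic containment
\[
\{\ell_{n+1}>\ell_{(k)}\}\subseteq\{\ell_{n+1}>\ell^{+}_{(k)}\}.
\]
Suppose $\ell_{n+1}>\ell_{(k)}$. Then at least $k$ of the calibration lengths are $\le \ell_{(k)}<\ell_{n+1}$. Adding $\ell_{n+1}$ to the sample therefore cannot change the $k$-th smallest value, so $\ell^{+}_{(k)}=\ell_{(k)}<\ell_{n+1}$. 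The inclusion in fact holds with equality, but only this direction is needed.

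Next I bound $\mathbb{P}(\ell_{n+1}>\ell^{+}_{(k)})$ using exchangeability. The event $\{\ell_{n+1}>\ell^{+}_{(k)}\}$ says that $\ell_{n+1}$ lies strictly above the $k$-th smallest of the $n+1$ values. Let $E_i=\{\ell_i>\ell^{+}_{(k)}\}$. The augmented order statistics are symmetric functions of the sample, so exchangeability gives $\mathbb{P}(E_i)=\mathbb{P}(E_{n+1})$ for every $i$. Deterministically, at most $n+1-k$ indices satisfy $E_i$, because at least $k$ values are $\le\ell^{+}_{(k)}$. Summing over $i$ then gives
\[
(n+1)\,\mathbb{P}(E_{n+1})=\mathbb{E}\Big[\sum_{i=1}^{n+1}\mathbf 1_{E_i}\Big]\le n+1-k.
\]
Hence $\mathbb{P}(E_{n+1})\le 1-k/(n+1)\le 1-(1-\alpha)=\alpha$, using $k\ge(n+1)(1-\alpha)$.

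Chaining the two steps yields $\mathbb{P}(\ell_{n+1}>\hat\tau)\le\alpha$. Since stopping too early on a well-posed query happens exactly when its length exceeds $\hat\tau$, this is the stated FPR guarantee.

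The main point to handle carefully is ties, since lengths are integers. Counting via "at least $k$ values are $\le\ell^{+}_{(k)}$" avoids any continuity assumption, and the strict inequality in the stopping event is what makes the argument work. The same argument, applied to the maxima $M_i$ instead of $\ell_i$, gives Proposition~\ref{prop:maxwise}.
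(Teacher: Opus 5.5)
Your proof is correct. It uses the same exchangeability idea as the paper's proof, but handles ties by a different device.

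\textbf{The paper's route.} The paper breaks ties uniformly at random and lets $R$ be the resulting rank of $\ell_{n+1}$ among all $n+1$ lengths. It uses that $R$ is uniform on $\{1,\dots,n+1\}$ and asserts $\{\ell_{n+1}\le\ell_{(k)}\}=\{R\le k\}$. With tied integer lengths this is really only the inclusion $\{R\le k\}\subseteq\{\ell_{n+1}\le\ell_{(k)}\}$; the reverse fails when, for example, all lengths are equal. The inclusion is still enough to give $\mathbb{P}(\ell_{n+1}\le\hat\tau)\ge k/(n+1)$.

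\textbf{Your route.} You reduce deterministically to the augmented order statistic $\ell^{+}_{(k)}$. You then replace uniformity of a randomized rank with the counting bound $\sum_i \mathbf{1}_{E_i}\le n+1-k$ together with the equality of the $\mathbb{P}(E_i)$ under exchangeability. No auxiliary randomization is needed, and the tie subtlety never arises. You also treat the case $k>n$, where $\ell_{(k)}$ is undefined and the convention $\hat\tau=+\infty$ applies. The paper's proof leaves that case implicit; its $+\infty$ convention is stated only for the Maxwise rule.

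\textbf{What each buys.} The paper's version is shorter and gives the familiar uniform-rank picture. Yours is fully rigorous for discrete lengths without any extra device, and it transfers verbatim to the Maxwise statistics $M_i$.
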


\begin{proof}
Let $R$ be the rank of $\ell_{n+1}$ among $\{\ell_1,\ldots,\ell_n,\ell_{n+1}\}$ when sorted in nondecreasing order (ties broken uniformly at random). By exchangeability, $R$ is marginally uniform on $\{1,\ldots,n+1\}$. By construction, the event $\{\ell_{n+1}\le \ell_{(k)}\}$ is equivalent to $\{R \le k\}$. Therefore,
$
\mathbb{P}(\ell_{n+1}\le \hat\tau) = \mathbb{P}(R \le k) \ge \frac{k}{n+1} \ge 1-\alpha.
$
Equivalently,
$
\mathbb{P}(\ell_{n+1} > \hat\tau) \;\le\; \alpha.
$
Since our procedure stops reasoning whenever $\ell_{n+1} > \hat\tau$, the probability of prematurely stopping on a well-posed query---i.e., the false positive rate---is at most $\alpha$. This completes the proof.
\end{proof}

\subsection{Linear Probing}
\label{app:linear_probing}
For each question in the 100 pairs of questions in the GSM8K training data, we randomly sample 1000 token-level activations $x$ from the reasoning trajectory, and construct a dataset $(x,y)_i$ where $y = 1$ if and only if the question is ill-posed. If the reasoning trace is shorter than 1000, we include all token-level activations. Because choosing the optimal layer is an art in itself, we only evaluate on four models, \texttt{Qwen3-14B}, \texttt{Qwen3-8B}, \texttt{DeepSeek-7B}, and \texttt{DeepSeek-14B}, whose optimal layers were identified by \citet{liu2025answering}. 

% \begin{table}[h]
% \centering
% \small
% \begin{tabular}{lcccc}
% \toprule
% Model & Dataset Size & Feature Dim & Optimal Layer & ROC-AUC \\
% \midrule
% DeepSeek-R1-7B & 137,588 & 3,584 & 17 & 0.9887 \\
% DeepSeek-R1-14B & 371,464 & 5,120 & 30 & 0.9981 \\
% Qwen3-8B & 386,815 & 4,096 & 24 & 0.9995 \\
% Qwen3-14B & 352,361 & 5,120 & 26 & 0.9995 \\
% \bottomrule
% \end{tabular}
% \caption{Linear probe dataset characteristics and performance across models.}
% \label{tab:probe statistics}
% \end{table}

We train a logistic regression classifier with ridge penalty on this dataset to predict whether the question is ill-posed based on the token-level activations. For completeness, Table~\ref{tab:probe statistics} summarizes the dataset sizes, feature dimensions, optimal probing layers, and probe performance (ROC-AUC) for all models we experimented with. We randomly split the reasoning traces into 80\% training and 20\% validation sets and collect the corresponding token-level activations; the AUC score is computed on this held-out test set.

\begin{table}[ht]
\centering
\small
\begin{tabular}{lcccc}
\toprule
Model & Dataset Size & Feature Dim & Optimal Layer & ROC-AUC \\
\midrule
DeepSeek-7B & 137,588 & 3,584 & 17 & 0.9887 \\
DeepSeek-14B & 371,464 & 5,120 & 30 & 0.9981 \\
Qwen3-8B & 386,815 & 4,096 & 24 & 0.9995 \\
Qwen3-14B & 352,361 & 5,120 & 26 & 0.9995 \\
\bottomrule
\end{tabular}
\caption{Linear probe dataset characteristics and performance across models.}
\label{tab:probe statistics}
\end{table}

\begin{table*}[ht]
\centering
\small
\begin{tabular}{llcccccccc}
\toprule
 \multirow{2}{*}{Model} & Stopping & \multicolumn2{c}{GSM-MC} & \multicolumn2{c}{UMWP} & \multicolumn2{c}{MiP} & \multicolumn2{c}{MMLU} \\
 & Rule & FPR & Power & FPR & Power & FPR & Power & FPR & Power \\
\midrule
DeepSeek & Probe & 6.00\% & 33.00\% & 4.82\% & 39.04\% & 38.46\% & 86.54\% & 35.34\% & 93.23\% \\
-R1-7B & PLS & 3.00\% & 29.00\% & 0.44\% & 10.09\% & 7.69\% & 69.23\% & 6.77\% & 84.96\% \\
 & Renewal & 0.00\% & 28.00\% & 0.88\% & 14.91\% & 0.00\% & 55.77\% & 0.75\% & 42.86\% \\
 & Maxwise & 6.00\% & 50.00\% & 5.26\% & 30.26\% & 3.85\% & 78.85\% & 5.26\% & 82.71\% \\
\cline{1-10}
Qwen3 & Probe & 7.00\% & 78.00\% & 3.95\% & 64.04\% & 34.62\% & 96.15\% & 43.61\% & 97.74\% \\
-8B & PLS & 11.00\% & 97.00\% & 11.84\% & 85.96\% & 21.15\% & 98.08\% & 12.78\% & 93.23\% \\
 & Renewal & 6.00\% & 92.00\% & 3.51\% & 71.93\% & 0.00\% & 84.62\% & 0.75\% & 80.45\% \\
 & Maxwise & 5.00\% & 86.00\% & 3.51\% & 66.67\% & 3.85\% & 73.08\% & 0.75\% & 81.95\% \\
\bottomrule
\end{tabular}
\caption{Early stopping rates for probing-based and uncertainty-based stopping rules on smaller-scale models (\texttt{DeepSeek-7B} and \texttt{Qwen3-8B}). ``FPR'' = stopping too early on well-posed queries. ``Power'' = stopping on ill-posed queries.}
\label{tab:probing-comparison-small}
\end{table*}

\paragraph{PLS.} We train the PLS model on the same token-level activation dataset and evaluate early-stopping performance using the same inference protocol as the linear probe. We remark that PLS is not an appropriate method for binary classification: it constructs components by maximizing the linear covariance $\langle x_j, y\rangle$ between each feature and the raw labels, which fundamentally differs from the likelihood-based gradient structure of logistic regression. We therefore use PLS not as a competitive classifier, but as an analytic probe to assess how much label-relevant information can be captured along a single linear direction in activation space.

As shown in Table \ref{tab:probing-comparison}, the single-component PLS model behaves very differently on the out-of-distribution datasets MiP and MMLU. In these settings, the PLS model consistently outperforms the linear probe by achieving substantially lower FPR, often reducing premature stopping by a large margin, while maintaining comparable or even higher power. This is consistent with the superposition hypothesis: the activations contain multiple intertwined features correlated with ill-posedness, and the probe’s apparent performance arises from exploiting a fragile combination of these signals rather than a robust, task-specific direction. 

We caution the reader that this behavior does not indicate that PLS is a good alternative; rather, it highlights a structural vulnerability of linear probing under superposition. A priori, the practitioner has no reason to believe that uncertainty, or any other task-relevant signal, will align with the first principal direction extracted by PLS.

\subsection{Logits-Based Early Stopping Variants}
\label{sec:logits-based_early_stopping_variants}
In this section, we describe implementation details of the logits-based early stopping variants that adapt the Maxwise conformal stopping rule from Section~\ref{sec:methods} to uncertainty scores derived from model logits. 

Following \citet{yang2025dynamic}, we treat positions immediately before each special \texttt{Wait} token as potential early-exit points.
At such a point, we invoke the \emph{answer inducer} to generate a trial answer from the current reasoning prefix and compute the DEER confidence \(C\) as the geometric mean of the per-token maximum probabilities of the induced answer (cf. Eq.~(4) in \citealt{yang2025dynamic}).
We then define a DEER-based uncertainty score as \(u_{\text{DEER}} = 1 - C\).
For the entropy baseline, we use the same answer inducer and early-exit locations, but replace \(C\) with the beam-search--based entropy of the induced answer distribution, as in \citet{yong2025think}.
In both cases, the resulting per-prefix uncertainty sequences are fed into the same Maxwise calibration and stopping rule as in Equation~\eqref{eq:uq_score} and Algorithm~\ref{alg:maxwise}. Some additional details of the implementation are:

\paragraph{Cold-start problem.}
For the logits-based methods we considered \citep{yang2025dynamic, yong2025think}, the uncertainty measurements obtained at the beginning of a reasoning trace can be unstable and non-discriminative: even for well-posed problems the model may initially maintain a broad distribution over possible answers before converging to a confident prediction later in the trace. Consequently, uncertainty signals in the earliest portion of the reasoning trace behave similarly for both well-posed and ill-posed queries. We refer to this phenomenon as the \emph{cold-start problem}.

To avoid premature early stopping, we introduce a \emph{cold-start index}~$s$, representing the minimum prefix length before any uncertainty-based stopping rule is allowed to act.

\paragraph{Selecting the cold-start index using GSM8K training data.}
To determine an appropriate cold-start offset, we use the GSM8K--AbstentionBench training split, which includes paired well-posed and ill-posed reasoning traces. 
For each candidate value $s \in \{0,1,\ldots,S_{\max}\}$, we:

\begin{enumerate}
    \item Discard all uncertainty measurements before position $s$.
    \item For each well-posed training trace $T^{(i)}$, compute 
    \(
    M_i^{(s)} = \max_{t \ge s} u_i(t),
    \)
    where $u_i(t)$ is the uncertainty score (keyword density, $1-C_{\mathrm{DEER}}$, or entropy).
    \item Apply the Maxwise calibration procedure from Section~\ref{sec:methods} using $\{M_i^{(s)}\}_{i=1}^n$ to obtain the threshold.
\end{enumerate}

We select the cold-start index that yields the highest detection power while maintaining reasonable false-positive behavior on the well-posed training traces:
\(
s^\star = \arg\max_s \mathrm{Power}(s).
\)
This procedure is applied uniformly to DEER-based, and entropy-based uncertainty signals.

\paragraph{Threshold calibration using GSM8K calibration data.}
After selecting $s^\star$ on the training split, we calibrate the final stopping threshold using only the well-posed reasoning traces in the GSM8K calibration split.  
For each calibration trace $T^{(i)}$, we compute 
\(
M_i = \max_{t \ge s^\star} u_i(t),
\)
and calibrate the stopping threshold $\tau^\star$ as the $(1-\alpha)$ quantile of $\{M_i\}_{i=1}^m$. This step ensures finite-sample control of the false positive rate at level $\alpha$ under exchangeability, and it does not require any ill-posed calibration data.

\paragraph{Test-time usage.}
During decoding, uncertainty scores $u(t)$ are monitored only for positions $t \ge s^\star$.
The stopping rule halts as soon as $u(t)$ exceeds $\tau^\star$ at one of the monitored checkpoints.
This two-stage (cold-start selection + threshold calibration) procedure removes unreliable early-signal noise and consistently improves FPR control across all uncertainty modalities.

\section{Additional Experimental Results}
\label{app: additional_experimental_results}

This section presents detailed results that supplement the findings reported in Section~\ref{sec:experiments}. 

\subsection{Generalization to Scientific Reasoning Tasks}

\begin{figure}[t]
    \centering

\begin{tikzpicture}[
  font=\small,
  box/.style={
    rectangle,
    rounded corners=3mm,
    draw=none,
    minimum width=4.2cm,
    minimum height=1.5cm,
    align=center,
    inner sep=6pt,
    blur shadow
  },
  process/.style={box, fill=blue!8},
  eval/.style={box, fill=blue!10},
  arrow/.style={-{Stealth[length=3mm,width=3mm]}, very thick}
]

% Nodes on one horizontal line
\node[process] (keyword) {%
  \textbf{Keyword Extraction}\\[2pt]
GSM8K (train)
};

\node[process, right=1cm of keyword] (calib) {%
  \textbf{Calibrate Stopping Rule}\\[2pt]
GSM8K (cal)
};

\node[eval, right=1cm of calib] (test) {%
  \textbf{Evaluate Stopping Rule}\\[2pt]
  GPQA (test), HLE
};

% Arrows left → right
\draw[arrow] (keyword) -- (calib);
\draw[arrow] (calib) -- (test);

\end{tikzpicture}
    \caption{Workflow for extraction, calibration, and testing of the stopping rule. We use GSM8K for both keyword extraction and calibration, and evaluate on scientific reasoning datasets.}
    \label{fig:flow_cross}
\end{figure}
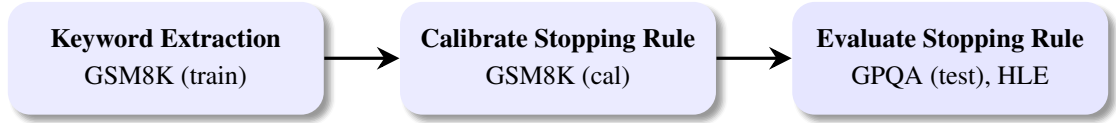

% \begin{table}[h]
% \centering
% \small
% \begin{tabular}{
%     p{0.1\linewidth}  % Dataset
%     p{0.025\linewidth}  % Test
%     p{0.7\linewidth}  % Description
% }
% \toprule
% \textbf{Dataset} & \textbf{Size} & \textbf{Description} \\
% \midrule
% GPQA   & 286 & Graduate-level, Google-proof QA \citep{rein2024gpqa}. \citet{kirichenko2025abstentionbench} applies the same procedure for GSM8K to create ill-posed queries.\\
% HLE    & 152 &  We select the non-math multiple-choice questions from the Humanity's Last Exam dataset \citep{phan2025humanity}. The same procedure for GSM8K is applied to create ill-posed queries.\\
% \bottomrule
% \end{tabular}
% \caption{Scientific and medical reasoning datasets used in our experiments.}
% \label{tab:sci_datasets}
% \end{table}

To evaluate the generalizability of our proposed stopping rules beyond math reasoning, we conduct experiments on scientific reasoning tasks. We utilize two scientific datasets, GPQA \citep{rein2024gpqa} and HLE \citep{phan2025humanity}. The datasets are summarized in Table \ref{tab:sci_datasets}. We follow \cite{kirichenko2025abstentionbench} to create ill-posed queries for these datasets, ensuring consistency with our previous experiments on math reasoning tasks. 

It is worth noting that scientific QA tasks are not as crisp as those in math datasets, which often exhibit a very clean condition-question dependence. Thus, the method proposed by \citet{kirichenko2025abstentionbench} to curate ill-posed queries does not necessarily render the questions unanswerable. For instance, consider the following problem: 
\begin{tcolorbox}[colback=gray!5, colframe=black!75, title=Example Medical Reasoning Problem]
\textit{Context: A 35-year-old woman presents with a painless lump in her right breast. She has a family history of breast cancer; with her mother and sister both being diagnosed with the disease at an early age. She has never undergone breast imaging. On physical examination, a firm, non-tender, mobile mass measuring 2 cm in diameter is palpated in the upper outer quadrant of the right breast. There are no other abnormalities. Considering the patient's clinical presentation and the need for a comprehensive diagnostic approach, which of the following highly specialized and advanced diagnostic modalities should be performed next to ascertain an accurate diagnosis, presenting a challenging yet valuable clinical puzzle for the medical professionals involved?\\
A. High-frequency ultrasound elastography integrated with power Doppler imaging and 3D reconstruction\\
B. Liquid biopsy utilizing massively parallel sequencing to detect circulating tumor DNA (ctDNA)\\
C. Digital breast tomosynthesis (DBT) combined with contrast-enhanced mammography using dual-energy techniques\\
D. Vacuum-assisted core needle biopsy with advanced immunohistochemistry and fluorescence in situ hybridization (FISH) analysis\\
Answer:}
\end{tcolorbox}

Even without the detailed context, one can deduce the correct answer (Vacuum-assisted core needle biopsy) based solely on the knowledge of the woman with a painless lump in her right breast and the mechanisms of action of the listed diagnostic modalities. The other options, while advanced and useful in certain contexts, do not provide definitive tissue-level diagnosis necessary for confirming malignancy.\footnote{The other options do not provide a direct sample of the tissue, which is the only way to confirm whether cancer cells are actually present. Imaging or blood-based tests can suggest something might be cancer, but only a biopsy can show malignant cells under a microscope, which is the gold standard for diagnosis.} As another example, consider the following problem:

\begin{tcolorbox}[colback=gray!5, colframe=black!75, title=Explanation]
\textit{ChIP-seq on a PFA-fixed sample with an antibody to the IKAROS transcription factor in human B cells followed by next-generation sequencing and standard quality control, alignment and peak-calling steps produced ChIP peaks that disappeared when PFA+DSG fixation was used. Where are we most likely to find such disappearing peaks?\\
A. At random locations in the genome\\
B. At active promoters and enhancers\\
C. In the introns of large genes\\
D. At repeats}
\end{tcolorbox}

In this example, language models with sufficient domain knowledge can deduce the correct answer (active promoters and enhancers) by inferring that the question asks about "disappearing peaks" in the context of "ChIP-seq for histone modifications after a drug treatment." This may also be attributed to leakage in the training set, as GPQA is constructed from web sources. However, the exact cause is not the focus of this discussion.

Both examples highlight that the curated ill-posed queries may not always achieve the intended ambiguity. 
Constructing high-quality datasets with truly unanswerable queries is crucial for evaluating the robustness of reasoning models. 
However, this is beyond the scope of our work. Instead, we use the soft upper bound established by the oracle as a yardstick to assess the effectiveness of our proposed method.

% \begin{figure*}[t]
%     \centering

% \begin{tikzpicture}[
%   font=\small,
%   box/.style={
%     rectangle,
%     rounded corners=3mm,
%     draw=none,
%     minimum width=4.2cm,
%     minimum height=1.5cm,
%     align=center,
%     inner sep=6pt,
%     blur shadow
%   },
%   process/.style={box, fill=blue!8},
%   eval/.style={box, fill=blue!10},
%   arrow/.style={-{Stealth[length=3mm,width=3mm]}, very thick}
% ]

% % Nodes on one horizontal line
% \node[process] (keyword) {%
%   \textbf{Keyword Extraction}\\[2pt]
% GSM8K (train)
% };

% \node[process, right=1cm of keyword] (calib) {%
%   \textbf{Calibrate Stopping Rule}\\[2pt]
% GSM8K (cal)
% };

% \node[eval, right=1cm of calib] (test) {%
%   \textbf{Evaluate Stopping Rule}\\[2pt]
%   GPQA (test), HLE
% };

% % Arrows left → right
% \draw[arrow] (keyword) -- (calib);
% \draw[arrow] (calib) -- (test);

% \end{tikzpicture}
%     \caption{Workflow for extraction, calibration, and testing of the stopping rule. We use GSM8K for both keyword extraction and calibration, and evaluate on scientific reasoning datasets.}
%     \label{fig:flow_cross}
% \end{figure*}

Following the same workflow as outlined in Figure \ref{fig:flow_cross}: we do not change the keyword set extracted and calibration based on the GSM8K set; however, we directly evaluate the calibrated stopping rules on the scientific reasoning datasets. The results are summarized in Table \ref{tab:sci_stopping_rules_cross}. On average, the false positive rates are significantly higher for the length-based baseline compared to our uncertainty-based methods. We notice that compared to the Length-based baseline, the change in FPR is much smaller, further indicating that our uncertainty-based methods are more robust to distribution shifts even when calibration is not performed in-domain. As shown in Figure \ref{fig:sci_oracle_comparison}, the regression slopes are 0.4702 and 0.5897, respectively, for Renewal and Maxwise, indicating that our methods achieve approximately 47-58\% of the oracle's detection power in this new domain. We note that the clusters of points representing GPQA and HLE are fairly well separated, with the former further away from the $y=x$ line. This is likely due to the stopping rules being more conservative on GPQA (lower FPR in Table \ref{tab:sci_stopping_rules_cross}, 1-2.6\% on average) compared to HLE, while our soft upper bound is calculated against an FPR at least $0.05$.

\begin{table}
\centering
\small
\begin{tabular}{lcccc}
\toprule
 \multirow{2}{*}{Stopping Rule} & \multicolumn{2}{c}{GPQA} & \multicolumn{2}{c}{HLE} \\
 & FPR & Power & FPR & Power \\
\midrule
No Intervention & 0.00\% & 0.12\% & 0.00\% & 3.85\% \\
Confidence & 12.85\% & 5.57\% & -0.43\% & 3.95\% \\
Criticism & -0.55\% & 9.12\% & 0.33\% & 4.11\% \\
Length & 42.40\% & 32.14\% & 23.74\% & 10.91\% \\
Renewal & 1.37\% & 24.94\% & 4.76\% & 24.62\% \\
Maxwise & 2.56\% & 32.40\% & 5.96\% & 31.28\% \\
\bottomrule
\end{tabular}
\caption{Performance of different stopping rules on scientific reasoning benchmarks when the keyword set and calibration are based on GSM8K.}
\label{tab:sci_stopping_rules_cross}
\end{table}

\begin{figure}
    \centering
    \includegraphics[width=1.0\linewidth]{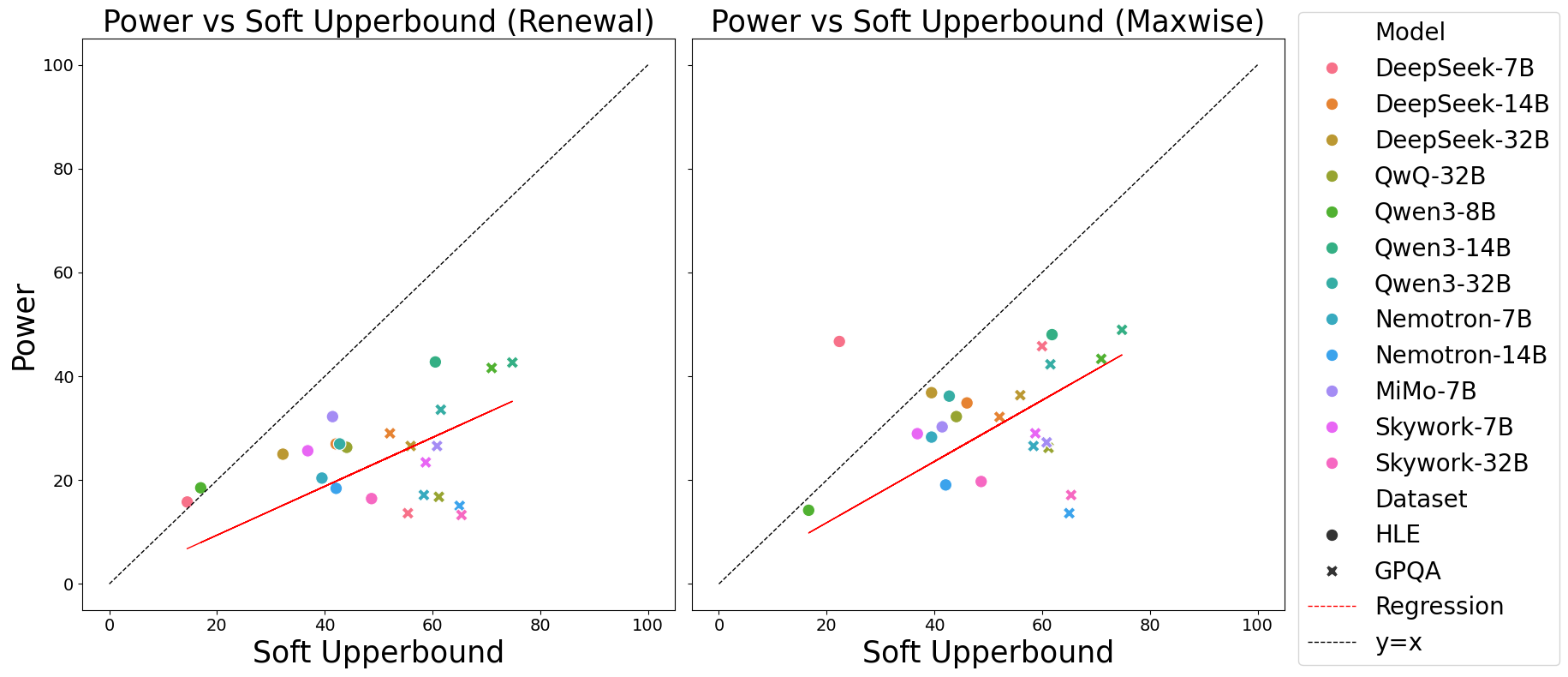}
    \caption{Comparison of early stopping rates between our proposed methods and the oracle upper bound across scientific and medical reasoning benchmarks. The regression slopes are 0.4702 and 0.5897 respectively for Renewal and Maxwise. }
    \label{fig:sci_oracle_comparison}
\end{figure}

We acknowledge that the ill-posed queries should ideally come from human-verified ambiguous questions, but it would require domain expertise and is hard to scale. As a partial validation step, we conduct a small-scale experiment using a strong LLM judge to filter genuinely unanswerable questions. Specifically, we sampled 50 ill-posed GPQA prompts and queried GPT-5.4 Pro to judge whether each question was answerable on its own. The judge classified 43/50 items (86\%) as genuinely unanswerable, while 7/50 were still answerable. We then recompute FPR and Power on (i) the full 50-item sample and (ii) the verified 43-item subset, keeping the rest of the experiment protocol unchanged and averaging over the same 12 models used in the paper. The results are given in the Table~\ref{tab:rebuttal_gpt_verify}

\begin{center}
\footnotesize
\begin{tabular}{lccc}
\toprule
Subset & Length (FPR/Power) & Renewal (FPR/Power) & Maxwise (FPR/Power) \\
\midrule
50-item GPQA sample & 48.00/34.33 & 1.33/23.67 & 2.33/31.17 \\
43-item verified subset & 54.84/38.95 & 1.55/27.52 & 2.71/36.24 \\
\bottomrule
\end{tabular}
\label{tab:rebuttal_gpt_verify}
\end{center}

The main pattern is that the keyword-based stopping rules remain effective, and in fact become somewhat more powerful on the verified unanswerable subset, with only a very small increase in FPR (The small difference in FPR is because we use the corresponding complete questions of the subset of 43 items rather than all 50).  This suggests that the scientific-domain signal is not mainly driven by noise from automatically constructed ill-posed questions. The genuinely unanswerable subset may, in fact, be easier for the stopping rules to detect.

\subsection{Ablation Studies}
\label{sec:ablation}

To better understand the contributions of different components in our proposed uncertainty-based stopping rules, we conduct ablation studies focusing on two key aspects: (1) the choice of keyword set, and (2) the interval for computing uncertainty scores.

\paragraph{Keyword Set Choice}
We conduct a leave-one-out (LOO) ablation in which we remove one category at a time from the keyword set and recompute the stopping rule. For each ablated variant, we evaluate the resulting power--FPR tradeoff across all datasets and compare it to the full keyword set (no ablation). Figure~\ref{fig:loo_keyword_ablation} presents the results with marker types indicating the removed keyword category and colors indicating the stopping rule (Maxwise vs.\ Renewal). The plot shows that removing any single category generally degrades performance, either by inflating FPR or reducing detection power relative to the no-ablation baseline, which consistently achieves the best tradeoff. Removing the ``Insufficiency'' category has the most negative impact, which is expected since these keywords comprises the biggest portion of the keyword set (see Table \ref{tab:example_keywords}). Removing either of the two other categories also leads to performance drops, though at times more modestly.

\begin{figure}
    \centering
    \includegraphics[width=1.0\linewidth]{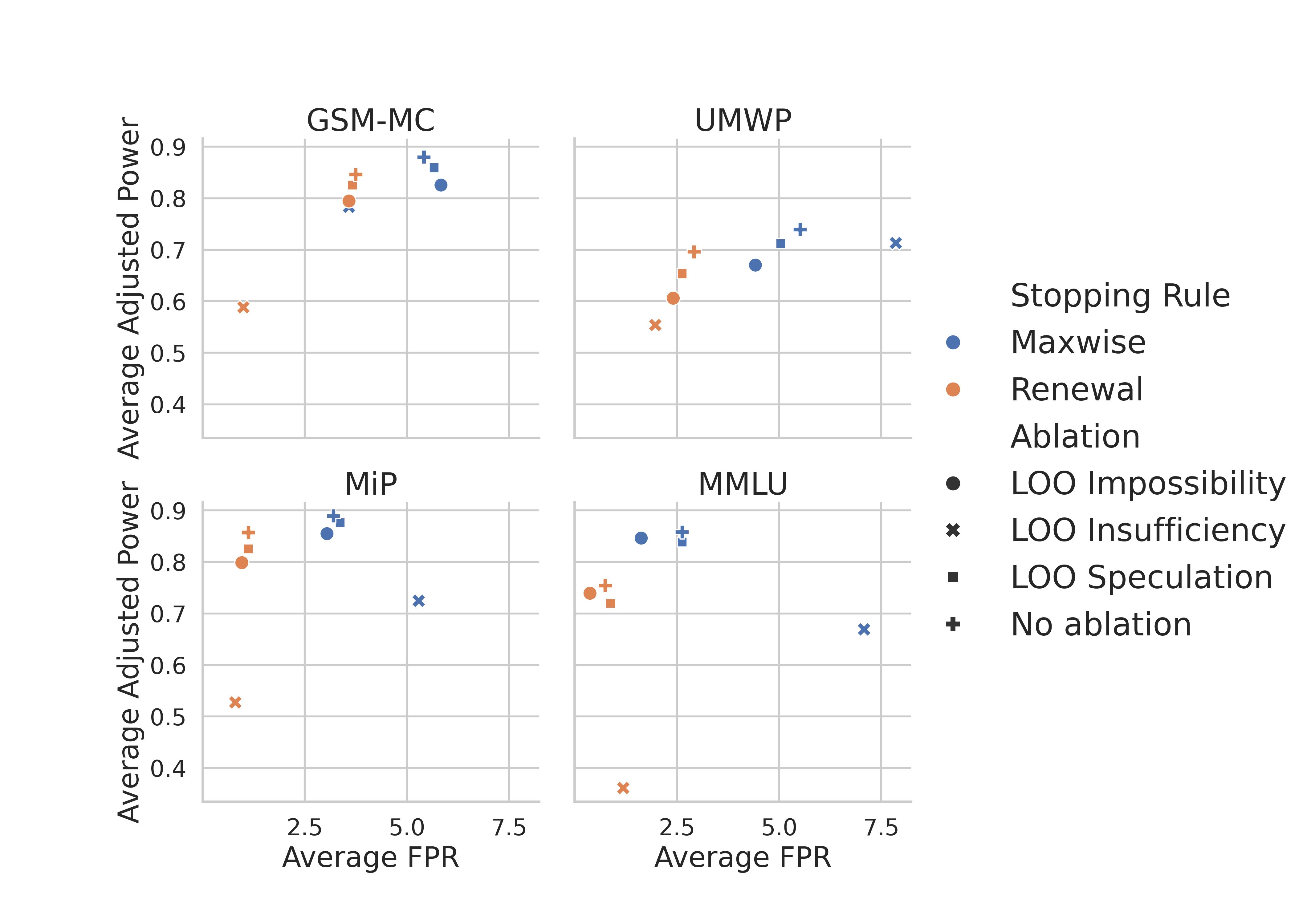}
    \caption{Leave-one-out ablation results for different keyword categories under Maxwise and Renewal stopping rules. The adjusted power is computed by dividing the empirical power by the corresponding soft upper bound for each dataset, model, and stopping rule.}
    \label{fig:loo_keyword_ablation}
\end{figure}

Our keyword set is constructed using reasoning traces from GSM8K only, our intention is to test if our method can generalize across data sets or even domains. As shown in Table~\ref{tab:sci_stopping_rules_cross}, on non-math reasoning tasks, our method still controls FPR and achieves strong detection power compared to baselines. In practice, uncertainty may be verbalized differently across domains. We conducted additional experiments by extracting keywords from (i) GPQA only and (ii) pooled GSM8K+GPQA traces, using the same pipeline as the one described in Section~\ref{sec:keyword_set_construction}. We find that the extracted keyword sets do vary across domains: GPQA yields 51 keywords (with 16 overlapping with GSM8K), while the pooled dataset yields 115 keywords, including additional phrases such as “assuming that” and “without this information.” Thus, the extracted keywords do change across domains, but they remain highly effective for early stopping: we test the effectiveness of the new extracted keyword sets by running the full workflow. The results are summarized in Table~\ref{tab:rebuttal_keywords}.

\begin{table}[htbp]

    \centering

    \footnotesize 

    \caption{Source-matched FPR and power after re-extracting keywords from different source datasets. Results averaged over 12 models.}

    \label{tab:rebuttal_keywords}

    \begin{tabular}{ll *{5}{cc}}

        \toprule

        \multirow{2}{*}{\textbf{Keyword Source}} & \multirow{2}{*}{\textbf{Stopping Rule}} & \multicolumn{2}{c}{\textbf{GSM-MC}} & \multicolumn{2}{c}{\textbf{UMWP}} & \multicolumn{2}{c}{\textbf{MiP}} & \multicolumn{2}{c}{\textbf{MMLU}} & \multicolumn{2}{c}{\textbf{HLE}} \\

        \cmidrule(lr){3-4} \cmidrule(lr){5-6} \cmidrule(lr){7-8} \cmidrule(lr){9-10} \cmidrule(lr){11-12}

        & & FPR & Power & FPR & Power & FPR & Power & FPR & Power & FPR & Power \\
        % \midrule 

        % % - & Length & 4.50 & 32.17 & 0.62 & 8.59 & 20.67 & 61.54 & 12.91 & 39.85 & 23.74 & 10.91 \\

        % % - & DEER & 5.08 & 37.25 & 2.12 & 18.86 & 16.67 & 62.34 & 20.18 & 52.63 & - & - \\

        % % - & Entropy & 4.83 & 44.92 & 1.32 & 15.31 & 19.39 & 55.45 & 2.57 & 15.29 & - & - \\

        \midrule

        GSM8K & Maxwise & 5.42 & 70.92 & 5.52 & 48.65 & 3.21 & 73.40 & 2.63 & 75.31 & 5.96 & 31.28 \\

        GSM8K & Renewal & 3.75 & 69.75 & 2.92 & 46.78 & 1.12 & 71.15 & 0.75 & 66.42 & 4.76 & 24.62 \\

        GPQA & Maxwise & 7.67 & 63.67 & 6.58 & 38.96 & 5.93 & 70.03 & 4.14 & 77.38 & 8.54 & 36.91 \\

        GPQA & Renewal & 6.00 & 65.00 & 3.29 & 37.57 & 3.04 & 73.40 & 1.57 & 76.19 & 7.76 & 34.10 \\

        GSM8K+GPQA & Maxwise & 6.42 & 69.83 & 5.85 & 45.91 & 4.17 & 75.96 & 3.20 & 75.44 & 6.03 & 30.31 \\

        GSM8K+GPQA & Renewal & 5.17 & 71.33 & 3.22 & 44.81 & 1.92 & 78.21 & 1.32 & 69.67 & 5.97 & 27.26 \\

        \bottomrule

    \end{tabular}

\end{table}

GPQA-only extraction improves power on some benchmarks, e.g., HLE (Renewal: 34.10 vs.\ 24.62) and MMLU (76.19 vs.\ 66.42), while pooled GSM8K+GPQA improves power on GSM-MC and MiP for Renewal (71.33 vs.\ 69.75; 78.21 vs.\ 71.15) and on MiP for Maxwise (75.96 vs.\ 73.40). Although some settings become less conservative and trade higher FPR for higher power, all keyword-based variants remain substantially stronger than prompting baseline and other stopping baselines such as Length/DEER/Entropy on the math benchmarks. Overall, these results indicate that while the specific keywords may vary across domains, the proposed framework remains robust and effective, and continues to generalize well with different keyword extraction sources.

\paragraph{Effect of Auxiliary Keyword Groups}
Figure~\ref{fig:add_keyword_ablation} further examines the impact of augmenting the core uncertainty keyword set $\mathcal{K}$ with the auxiliary categories \emph{Epistemic Uncertainty} (e.g. ``Maybe'') and \emph{Transition} (e.g. ``Wait''). Across datasets, adding either auxiliary group does not consistently improve the power–FPR tradeoff relative to the no-ablation baseline. In several cases, particularly for UMWP and MiP, incorporating Transition keywords leads to a noticeable increase in FPR accompanied by reduced adjusted power, suggesting that transition phrases are less diagnostic of ill-posedness and may introduce spurious signals. Adding Epistemic Uncertainty keywords can give mixed effects, with modest gains in some settings but no systematic improvement over the base configuration. Overall, since one can imagine these two auxiliary categories also show up frequently in well-posed queries, these results support our design choice to exclude $\mathcal{K}{\mathrm{epi}}$ and $\mathcal{K}{\mathrm{trans}}$ from the primary keyword set and to focus on the three core uncertainty categories, which provide a more stable and interpretable signal for stopping.

\begin{figure}
    \centering
    \includegraphics[width=1.0\linewidth]{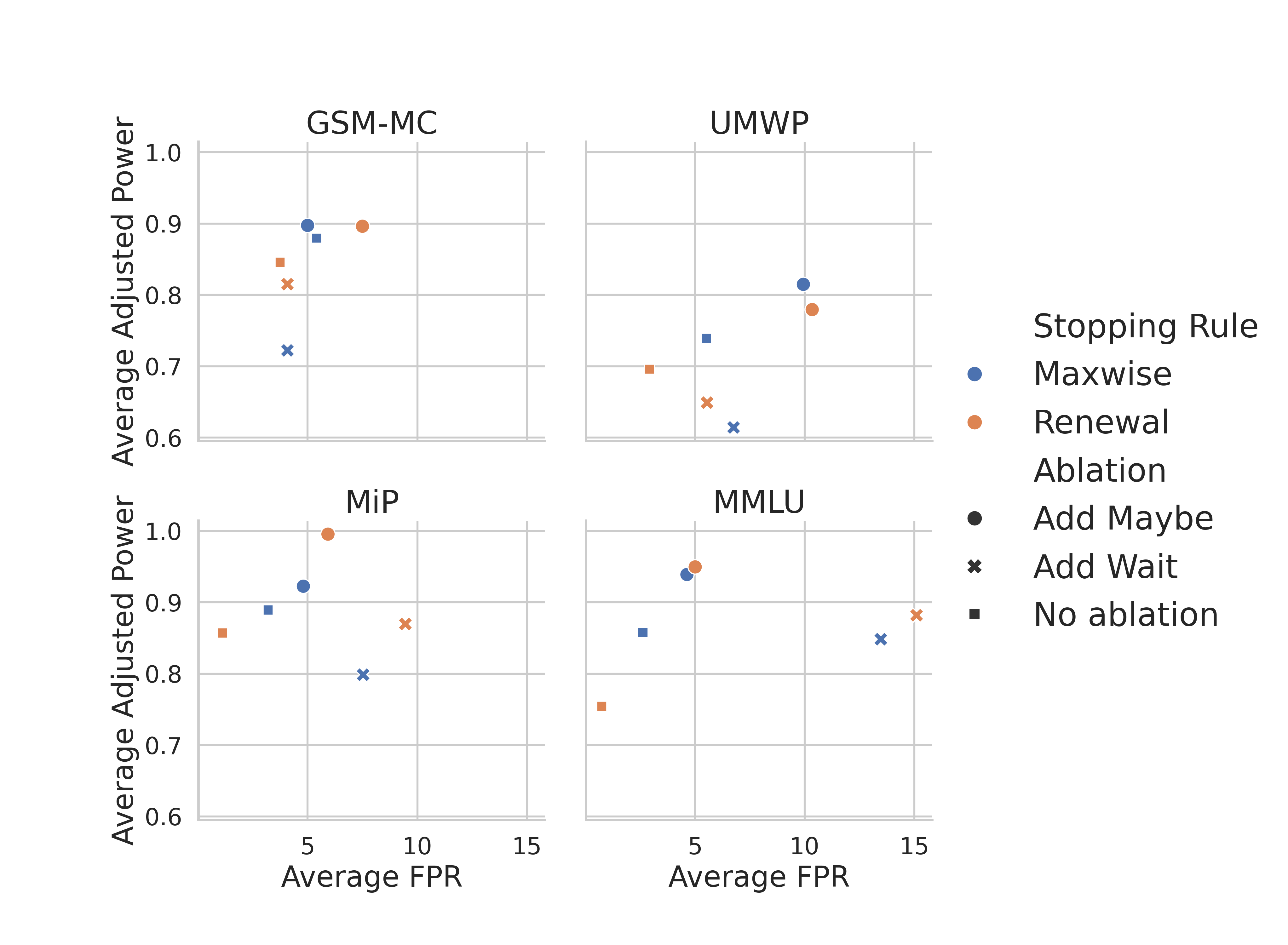}
    \caption{Added keywords ablation results for different keyword categories under Maxwise and Renewal stopping rules. The adjusted power is computed by dividing the empirical power by the corresponding soft upper bound for each dataset, model, and stopping rule.}
    \label{fig:add_keyword_ablation}
\end{figure}

\paragraph{Bin Size $B$ for Uncertainty Score Computation}
We investigate the effect of varying the interval size used to compute uncertainty scores, considering intervals of $\{100, 250, 500\}$ tokens. As shown in Figure~\ref{fig:interval_ablation}, the impact of interval size differs between the two stopping rules. For the Renewal rule, though the power increases with smaller intervals for the GSM-MC and UMWP datasets, the increase is not present for MiP and MMLU. Overall, the Renewal rule appears relatively robust to the choice of interval size. In contrast, the Maxwise rule exhibits greater sensitivity to smaller intervals (e.g., 100 tokens), which can lead to higher power but also noticeably higher FPR, whereas larger intervals yield more conservative behavior. This suggests that Maxwise stopping rule could react more strongly to early uncertainty fluctuations.

\begin{figure}
    \centering
    \includegraphics[width=1.0\linewidth]{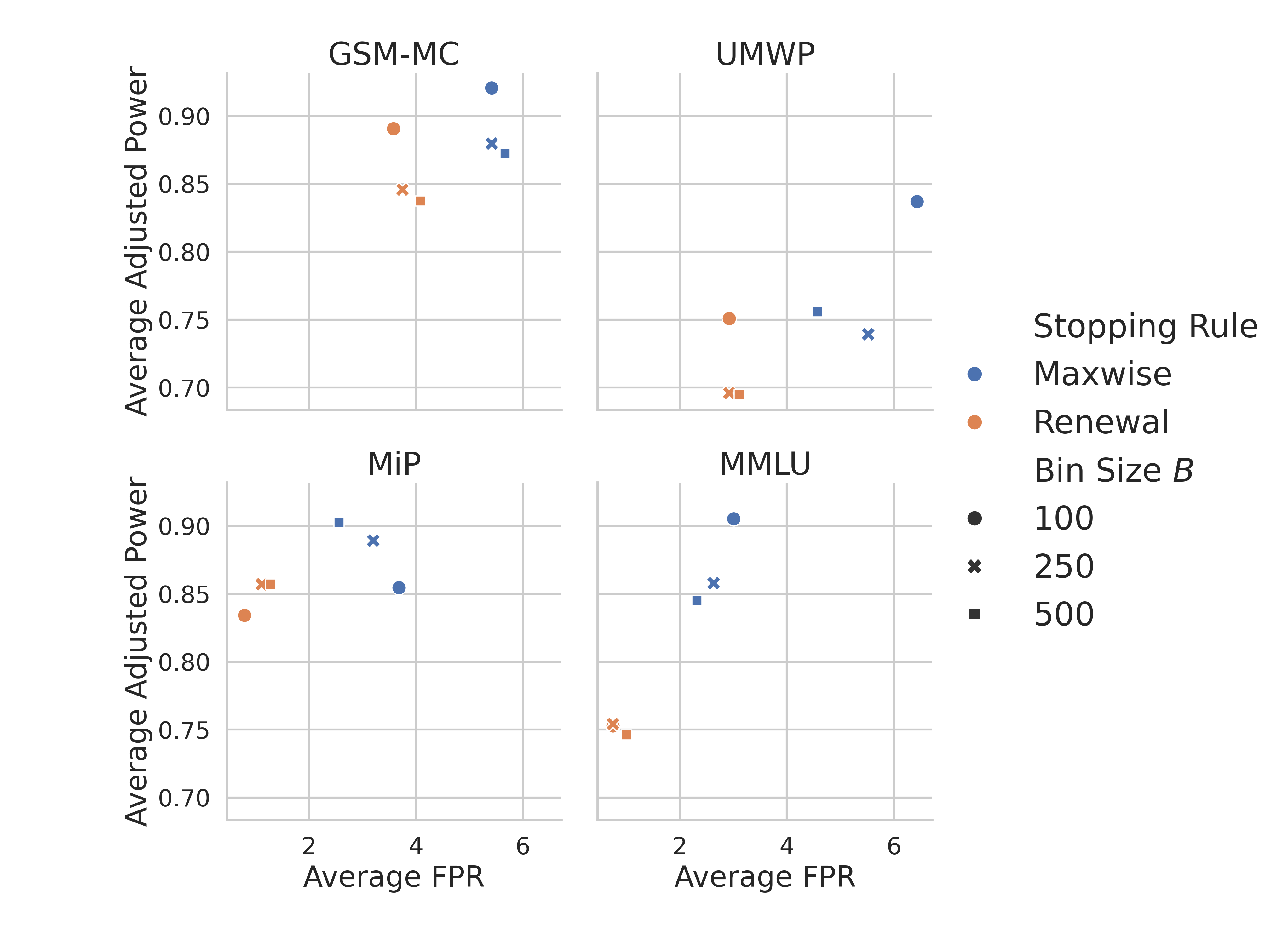}
    \caption{Ablation results for different uncertainty score interval sizes under Maxwise and Renewal stopping rules. The adjusted power is computed by dividing the empirical power by the corresponding soft upper bound for each dataset, model, and stopping rule.}
    \label{fig:interval_ablation}
\end{figure}

We additionally examine how interval size influences efficiency, measured as the percentage of tokens saved relative to full-length generation. Figure~\ref{fig:interval_ablation_efficiency} shows the efficiency–FPR tradeoff under the same set of interval choices. In general, smaller intervals lead to higher token savings at the same FPR level, as the stopping rules can respond more quickly to rising uncertainty. However, the efficiency gains from reducing interval size are relatively modest, as compared to the impact on detection power observed in Figure~\ref{fig:interval_ablation}. This suggests that while finer-grained uncertainty monitoring can enhance early stopping effectiveness, the overall efficiency benefits are less sensitive to interval granularity.

\begin{figure}
    \centering
    \includegraphics[width=1.0\linewidth]{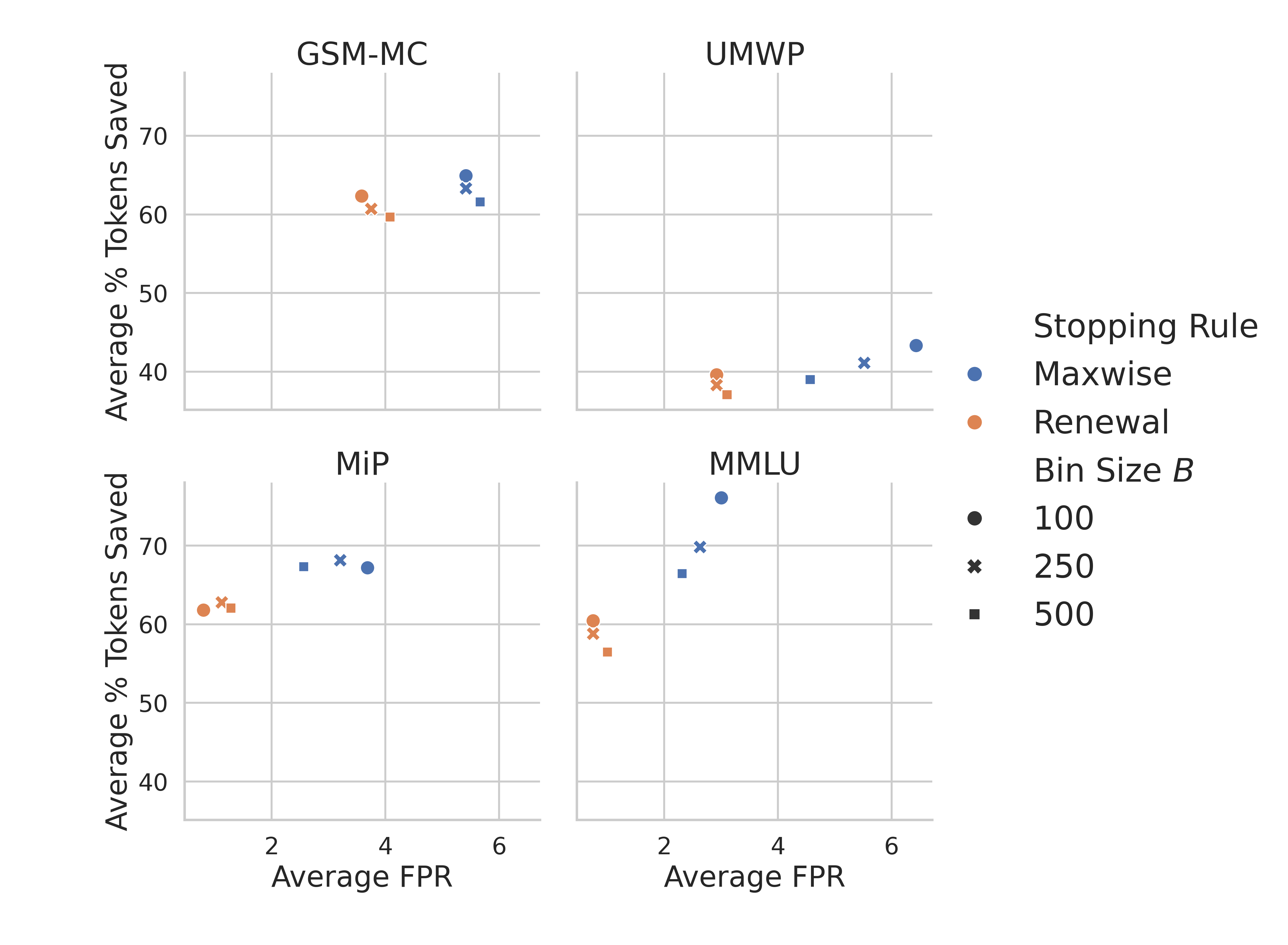}
    \caption{Ablation results for different uncertainty score interval sizes under Maxwise and Renewal stopping rules.}
    \label{fig:interval_ablation_efficiency}
\end{figure}

\paragraph{$L_{\max}$ for Sidák correction} In our experiments in the main text, we set $L_{\max}$ to be the median length of calibration traces. We investigate the effect of different choices of $L_{\max}$.  
We repeat the analysis with $L_{\max}$ set to the 25th percentile and the 75th percentile of calibration-trace lengths. Table~\ref{tab:rebuttal_lmax} reports FPR and Power averaged over the four math benchmarks (GSM-MC, UMWP, MiP, and MMLU) and all 12 models. The results indicate that our method is not sensitive to the choice of $L_{\max}$.

\begin{center}
\footnotesize
\begin{tabular}{lcc}
\toprule
Renewal variant & Avg FPR & Avg Power \\
\midrule
Default setting: $L_{\max}=$ median & 2.14\% & 63.53\% \\
Smaller $L_{\max}$: 25th percentile & 2.37\% & 64.61\% \\
Larger $L_{\max}$: 75th percentile & 1.87\% & 61.86\% \\
\bottomrule
\end{tabular}
\label{tab:rebuttal_lmax}
\end{center}

\begin{table}
\centering
\small
\begin{adjustbox}{width=\textwidth} % Scale the table to fit within the page width
\begin{tabular}{l|ccc|ccc|ccc|ccc}
\toprule
 & \multicolumn{3}{c|}{GSM-MC} & \multicolumn{3}{c|}{UMWP} & \multicolumn{3}{c|}{MiP} & \multicolumn{3}{c}{MMLU} \\
Model & FPR & Power & Soft UB & FPR & Power & Soft UB & FPR & Power & Soft UB & FPR & Power & Soft UB \\
\midrule
DeepSeek-7B & 0.00\% & 28.00\% & 40.00\% & 0.88\% & 14.91\% & 21.49\% & 0.00\% & 55.77\% & 67.31\% & 0.75\% & 42.86\% & 82.67\% \\
DeepSeek-14B & 5.00\% & 70.00\% & 76.00\% & 2.19\% & 39.91\% & 51.32\% & 1.92\% & 73.08\% & 80.77\% & 0.75\% & 72.93\% & 88.00\% \\
DeepSeek-32B & 2.00\% & 30.00\% & 54.00\% & 0.00\% & 3.07\% & 45.61\% & 0.00\% & 73.08\% & 82.69\% & 0.75\% & 75.94\% & 86.45\% \\
QwQ-32B & 2.00\% & 82.00\% & 92.00\% & 2.19\% & 57.46\% & 71.93\% & 0.00\% & 67.31\% & 82.69\% & 0.75\% & 57.89\% & 87.98\% \\
Qwen3-8B & 6.00\% & 92.00\% & 97.00\% & 3.51\% & 71.93\% & 80.70\% & 0.00\% & 84.62\% & 88.46\% & 0.75\% & 80.45\% & 92.46\% \\
Qwen3-14B & 4.00\% & 88.00\% & 92.00\% & 2.63\% & 66.67\% & 78.51\% & 3.85\% & 80.77\% & 86.54\% & 1.50\% & 82.71\% & 90.98\% \\
Qwen3-32B & 5.00\% & 86.00\% & 89.00\% & 2.19\% & 60.53\% & 74.56\% & 1.92\% & 73.08\% & 88.46\% & 0.00\% & 80.45\% & 88.73\% \\
Nemotron-7B & 5.00\% & 85.00\% & 85.00\% & 3.07\% & 65.79\% & 70.18\% & 1.92\% & 73.08\% & 84.62\% & 0.75\% & 65.41\% & 85.67\% \\
Nemotron-14B & 6.00\% & 88.00\% & 95.00\% & 5.70\% & 64.04\% & 77.19\% & 0.00\% & 65.38\% & 84.62\% & 0.75\% & 55.64\% & 91.00\% \\
MiMo-7B & 7.00\% & 91.00\% & 94.00\% & 10.96\% & 76.32\% & 85.96\% & 1.92\% & 75.00\% & 84.62\% & 1.50\% & 74.44\% & 88.70\% \\
Skywork-7B & 3.00\% & 73.00\% & 75.00\% & 1.75\% & 34.21\% & 51.32\% & 1.92\% & 71.15\% & 75.00\% & 0.00\% & 66.92\% & 90.22\% \\
Skywork-32B & 0.00\% & 24.00\% & 70.00\% & 0.00\% & 6.58\% & 47.37\% & 0.00\% & 61.54\% & 92.31\% & 0.75\% & 41.35\% & 78.90\% \\
Average & 3.75\% & 69.75\% & 79.92\% & 2.92\% & 46.79\% & 63.01\% & 1.12\% & 71.15\% & 83.17\% & 0.75\% & 66.42\% & 87.65\% \\
\bottomrule
\end{tabular}
\end{adjustbox}
\caption{Detailed early stopping rates by the Renewal stopping rule across different models and math reasoning datasets at a target FPR of 5\%.}
\end{table}

\begin{table}
\centering
\small
\begin{adjustbox}{width=\textwidth} % Scale the table to fit within the page width
\begin{tabular}{l|ccc|ccc|ccc|ccc}
\toprule
 & \multicolumn{3}{c|}{GSM-MC} & \multicolumn{3}{c|}{UMWP} & \multicolumn{3}{c|}{MiP} & \multicolumn{3}{c}{MMLU} \\
Model & FPR & Power & Soft UB & FPR & Power & Soft UB & FPR & Power & Soft UB & FPR & Power & Soft UB \\
\midrule
DeepSeek-7B & 6.00\% & 50.00\% & 40.00\% & 5.26\% & 30.26\% & 22.37\% & 3.85\% & 78.85\% & 67.31\% & 5.26\% & 82.71\% & 82.67\% \\
DeepSeek-14B & 9.00\% & 72.00\% & 80.00\% & 2.63\% & 43.42\% & 51.32\% & 3.85\% & 78.85\% & 80.77\% & 3.01\% & 75.94\% & 88.00\% \\
DeepSeek-32B & 3.00\% & 33.00\% & 54.00\% & 0.00\% & 4.39\% & 45.61\% & 3.85\% & 82.69\% & 82.69\% & 4.51\% & 82.71\% & 86.45\% \\
QwQ-32B & 6.00\% & 81.00\% & 94.00\% & 6.14\% & 60.96\% & 75.88\% & 0.00\% & 63.46\% & 82.69\% & 1.50\% & 72.18\% & 87.98\% \\
Qwen3-8B & 5.00\% & 86.00\% & 91.00\% & 3.51\% & 66.67\% & 80.70\% & 3.85\% & 73.08\% & 88.46\% & 0.75\% & 81.95\% & 92.46\% \\
Qwen3-14B & 4.00\% & 88.00\% & 92.00\% & 6.58\% & 68.86\% & 81.58\% & 5.77\% & 82.69\% & 86.54\% & 6.02\% & 86.47\% & 91.73\% \\
Qwen3-32B & 5.00\% & 88.00\% & 89.00\% & 7.89\% & 67.11\% & 79.39\% & 5.77\% & 76.92\% & 88.46\% & 2.26\% & 84.96\% & 88.73\% \\
Nemotron-7B & 8.00\% & 87.00\% & 94.00\% & 6.58\% & 67.98\% & 76.75\% & 3.85\% & 75.00\% & 84.62\% & 2.26\% & 77.44\% & 85.67\% \\
Nemotron-14B & 7.00\% & 77.00\% & 95.00\% & 8.77\% & 57.89\% & 81.14\% & 0.00\% & 63.46\% & 84.62\% & 0.75\% & 57.14\% & 91.00\% \\
MiMo-7B & 8.00\% & 88.00\% & 94.00\% & 14.04\% & 73.68\% & 90.35\% & 1.92\% & 67.31\% & 84.62\% & 3.76\% & 78.20\% & 88.70\% \\
Skywork-7B & 4.00\% & 75.00\% & 75.00\% & 4.39\% & 34.65\% & 51.32\% & 1.92\% & 69.23\% & 75.00\% & 0.75\% & 74.44\% & 90.22\% \\
Skywork-32B & 0.00\% & 26.00\% & 70.00\% & 0.44\% & 7.89\% & 47.37\% & 3.85\% & 69.23\% & 92.31\% & 0.75\% & 49.62\% & 78.90\% \\
Average & 5.42\% & 70.92\% & 80.67\% & 5.52\% & 48.65\% & 65.31\% & 3.21\% & 73.40\% & 83.17\% & 2.63\% & 75.31\% & 87.71\% \\
\bottomrule
\end{tabular}
\end{adjustbox}
\caption{Detailed early stopping rates by the Maxwise stopping rule across different models and math reasoning datasets at a target FPR of 5\%.}
\end{table}

\begin{table}
\centering
\small
\begin{tabular}{l|ccc|ccc}
\toprule
 & \multicolumn{3}{c|}{GPQA} & \multicolumn{3}{c}{HLE} \\
Model & FPR & Power & Soft UB & FPR & Power & Soft UB \\
\midrule
DeepSeek-7B & 0.00\% & 13.64\% & 55.43\% & 2.63\% & 15.79\% & 14.47\% \\
DeepSeek-14B & 2.10\% & 29.02\% & 52.10\% & 8.55\% & 26.97\% & 42.11\% \\
DeepSeek-32B & 1.75\% & 26.57\% & 55.94\% & 4.61\% & 25.00\% & 32.24\% \\
QwQ-32B & 0.70\% & 16.78\% & 61.19\% & 2.63\% & 26.32\% & 44.08\% \\
Qwen3-8B & 3.85\% & 41.61\% & 70.98\% & 7.10\% & 18.52\% & 16.98\% \\
Qwen3-14B & 2.10\% & 42.66\% & 74.83\% & 8.55\% & 42.76\% & 60.53\% \\
Qwen3-32B & 1.05\% & 33.57\% & 61.54\% & 5.92\% & 26.97\% & 42.76\% \\
Nemotron-7B & 1.40\% & 17.13\% & 58.39\% & 2.63\% & 20.39\% & 39.47\% \\
Nemotron-14B & 0.35\% & 15.03\% & 65.03\% & 1.97\% & 18.42\% & 42.11\% \\
MiMo-7B & 0.70\% & 26.57\% & 60.84\% & 4.61\% & 32.24\% & 41.45\% \\
Skywork-7B & 2.45\% & 23.43\% & 58.74\% & 4.61\% & 25.66\% & 36.84\% \\
Skywork-32B & 0.00\% & 13.29\% & 65.38\% & 3.29\% & 16.45\% & 48.68\% \\
Average & 1.37\% & 24.94\% & 61.70\% & 4.76\% & 24.62\% & 38.48\% \\
\bottomrule
\end{tabular}
\caption{Detailed early stopping rates by the Renewal stopping rule following Figure~\ref{fig:flow_cross} across different models and scientific reasoning datasets at a target FPR of 5\%. Here we use GSM8K for calibration.}
\end{table}

\begin{table}
\centering
\small
\begin{tabular}{l|ccc|ccc}
\toprule
 & \multicolumn{3}{c|}{GPQA} & \multicolumn{3}{c}{HLE} \\
Model & FPR & Power & Soft UB & FPR & Power & Soft UB \\
\midrule
DeepSeek-7B & 8.74\% & 45.80\% & 59.99\% & 13.82\% & 46.71\% & 22.37\% \\
DeepSeek-14B & 2.80\% & 32.17\% & 52.10\% & 11.18\% & 34.87\% & 46.05\% \\
DeepSeek-32B & 3.85\% & 36.36\% & 55.94\% & 7.89\% & 36.84\% & 39.47\% \\
QwQ-32B & 1.05\% & 26.22\% & 61.19\% & 3.95\% & 32.24\% & 44.08\% \\
Qwen3-8B & 1.40\% & 43.36\% & 70.98\% & 3.70\% & 14.20\% & 16.67\% \\
Qwen3-14B & 2.45\% & 48.95\% & 74.83\% & 9.21\% & 48.03\% & 61.84\% \\
Qwen3-32B & 2.10\% & 42.31\% & 61.54\% & 5.92\% & 36.18\% & 42.76\% \\
Nemotron-7B & 2.10\% & 26.57\% & 58.39\% & 3.95\% & 28.29\% & 39.47\% \\
Nemotron-14B & 0.35\% & 13.64\% & 65.03\% & 2.63\% & 19.08\% & 42.11\% \\
MiMo-7B & 1.40\% & 27.27\% & 60.84\% & 2.63\% & 30.26\% & 41.45\% \\
Skywork-7B & 3.50\% & 29.02\% & 58.74\% & 3.29\% & 28.95\% & 36.84\% \\
Skywork-32B & 1.05\% & 17.13\% & 65.38\% & 3.29\% & 19.74\% & 48.68\% \\
Average & 2.57\% & 32.40\% & 62.08\% & 5.96\% & 31.28\% & 40.15\% \\
\bottomrule
\end{tabular}
\caption{Detailed early stopping rates by the Maxwise stopping rule following Figure~\ref{fig:flow_cross} across different models and scientific reasoning datasets at a target FPR of 5\%. Here we use GSM8K for calibration.}
\end{table}

\begin{figure}
    \centering
    \includegraphics[width=1.0\linewidth]{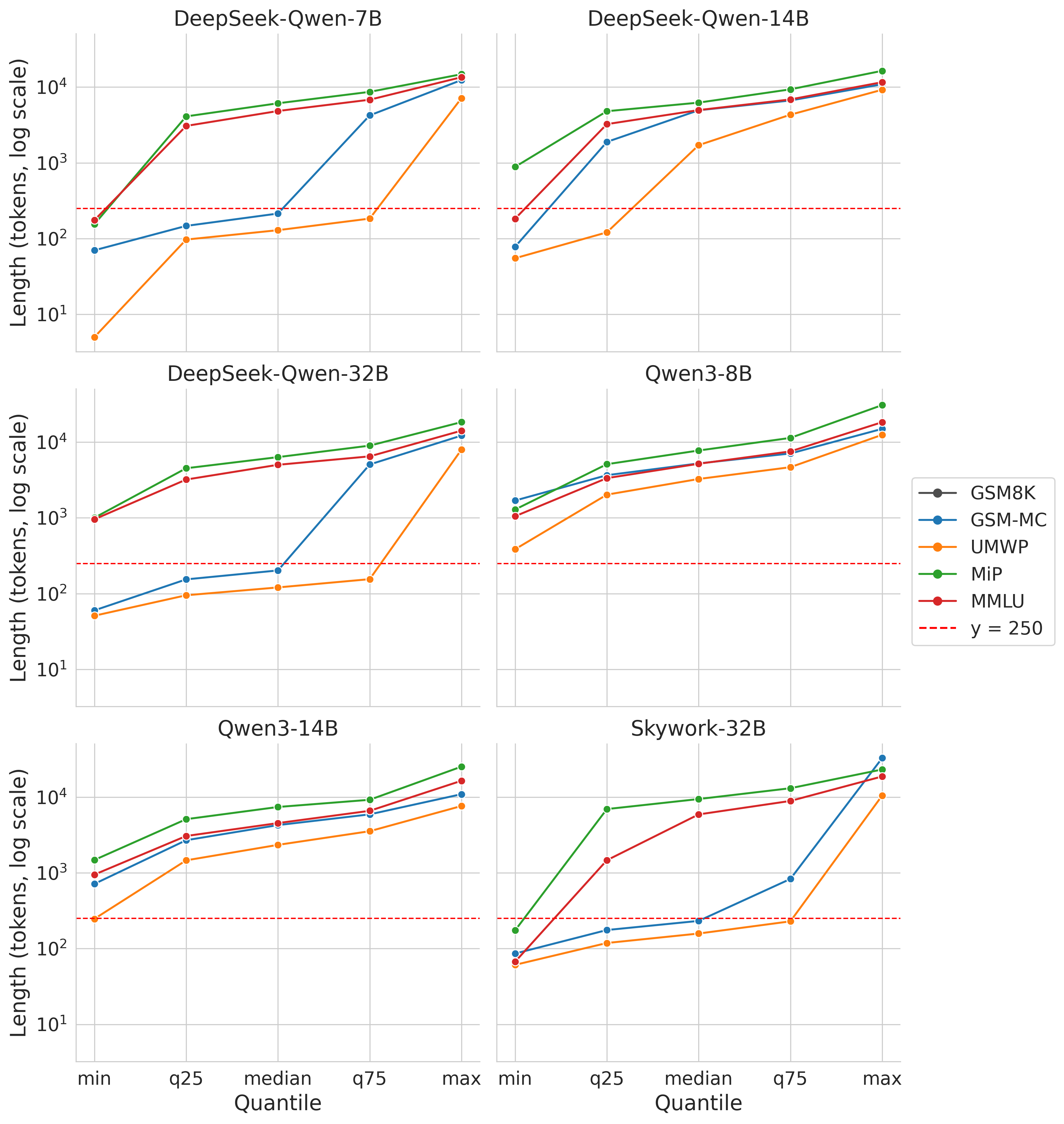}
    \caption{Quantiles of reasoning trace lengths (log scale) for UMWP for select DeepSeek and Skywork models. The red dashed line indicates $B=250$, the interval at which uncertainty scores are evaluated. A large fraction of UMWP responses terminate before this point, limiting the opportunity for early stopping.}
    \label{fig:short}
\end{figure}

\section{Proof}
\label{appendix:proof}
\paragraph{Proof of Proposition \ref{prop:maxwise}.}
The values $(M_1,\ldots,M_n,M_{n+1})$ are exchangeable since each $M_i$ is a deterministic functional of the corresponding trace, and the traces are assumed to be independent and identically distributed. By conformal validity, the test statistic $M_{n+1}$ falls below the $(1-\alpha)(1+1/n)$ quantile $\tau^\star$ with probability at least $1-\alpha$. Hence
\(
\mathbb{P}(M_{n+1} > \tau^\star) \le \alpha.
\)
Equivalently, the chance that the test trace ever crosses the global threshold across all bins is controlled at level $\alpha$.

\end{document}